\documentclass{article}

\PassOptionsToPackage{numbers}{natbib}
\bibliographystyle{plainnat}
\usepackage{adjustbox}
\usepackage[final]{neurips_2023}

\usepackage[utf8]{inputenc} 
\usepackage[T1]{fontenc}    
\usepackage{hyperref}       
\usepackage{url}            
\usepackage{booktabs}       
\usepackage{amsfonts}       
\usepackage{nicefrac}       
\usepackage{microtype}      
\usepackage{xcolor}         

\usepackage{amsmath}
\usepackage{amssymb}
\usepackage{mathtools}
\usepackage{amsthm}

\usepackage[capitalize,noabbrev]{cleveref}

\usepackage{multirow}
\usepackage{xspace}
\newcommand{\ours}{\text{DAE-VQGAN}\xspace}

\theoremstyle{plain}
\newtheorem{theorem}{Theorem}[section]
\newtheorem{proposition}[theorem]{Proposition}
\newtheorem{lemma}[theorem]{Lemma}

\theoremstyle{definition}

\newtheorem{assumption}[theorem]{Assumption}
\theoremstyle{remark}
\newtheorem{remark}[theorem]{Remark}
\usepackage{smile}

\usepackage[textsize=tiny]{todonotes}

\title{Complexity Matters: 
Rethinking the Latent Space\\ for Generative Modeling
}

\author{%
  \href{mailto:<hutianyang1@huawei.com>}{Tianyang Hu}\textsuperscript{\rm 1}, \  
   \href{mailto:<chen.f@huawei.com>}{Fei Chen}\textsuperscript{\rm 1}, 
  \href{mailto:<haonan.wang@u.nus.edu>}{Haonan Wang}\textsuperscript{\rm 2}, 
 \href{mailto:<li.jiawei@huawei.com>}{Jiawei Li}\textsuperscript{\rm 1}, \\
  \textbf{\href{mailto:<wenjiawang@ust.hk>}{Wenjia Wang}\textsuperscript{\rm 3}, \ 
   \href{mailto:<sunjiacheng1@huawei.com>}{Jiacheng Sun}\textsuperscript{\rm 1}\thanks{Corresponding to: Jiacheng Sun (sunjiacheng1@huawei.com)}, \ 
  \href{mailto:<Li.Zhenguo@huawei.com>}{Zhenguo Li}\textsuperscript{\rm 1}}
  \\
  ~\\
  \textsuperscript{\rm 1} Huawei Noah's Ark Lab,\ \ 
  \textsuperscript{\rm 2} National University of Singapore \\
  \textsuperscript{\rm 3} Hong Kong University of Science and Technology (Guangzhou) \\
}

\begin{document}

\maketitle

\begin{abstract}
In generative modeling, numerous successful approaches leverage a low-dimensional latent space, e.g., Stable Diffusion \citep{rombach2022high} models the latent space induced by an encoder and generates images through a paired decoder. 
Although the selection of the latent space is empirically pivotal, determining the optimal choice and the process of identifying it remain unclear.
In this study, we aim to shed light on this under-explored topic by rethinking the latent space from the perspective of model complexity. 
Our investigation starts with the classic generative adversarial networks (GANs). 
Inspired by the GAN training objective, we propose a novel "distance" between the latent and data distributions, whose minimization coincides with that of the generator complexity.
The minimizer of this distance is characterized as the optimal data-dependent latent that most effectively capitalizes on the generator's capacity.
Then, we consider parameterizing such a latent distribution by an encoder network and propose a two-stage training strategy called Decoupled Autoencoder (DAE), where the encoder is only updated in the first stage with an auxiliary decoder and then frozen in the second stage while the actual decoder is being trained. 
DAE can improve the latent distribution and as a result, improve the generative performance.
Our theoretical analyses are corroborated by comprehensive experiments on various models such as VQGAN \citep{esser2021taming} and Diffusion Transformer \citep{peebles2022scalable}, where our modifications yield significant improvements in sample quality with decreased model complexity. 

\end{abstract}


\section{Introduction}\label{sec:intro}
In the past decade, deep generative models have achieved great success across various domains such as images, audio, videos, and graphs \citep{ho2020denoising, oord2016wavenet, ho2022video, hoogeboom2022equivariant}. 
The advances are epitomized by recent breakthroughs in text-driven image generation \citep{saharia2022photorealistic, ramesh2022hierarchical, ramesh2021zero, rombach2022high}. 
Behind the empirical success are fruitful developments of a wide variety of deep generative models, among which, many can be associated with a latent space, e.g., generative adversarial networks (GANs) \citep{goodfellow2014generative, arjovsky2017wasserstein, karras2019style}, Variational Autoencoders (VAEs) \citep{kingma2013auto}, and latent diffusion models \citep{vahdat2021score, rombach2022high, peebles2022scalable}. 
It is commonly believed that for structured data such as images, the intrinsic dimension is much lower than the actual dimension \citep{ansuini2019intrinsic, pope2021intrinsic}. By utilizing a proper low-dimensional latent space, generative modeling can be more efficient \citep{vahdat2021score, liang2021well,uppal2019nonparametric}. 
Take text-to-image generation as an example, state-of-the-art models such as DALL-E \citep{ramesh2021zero}, Parti \citep{yu2022scaling}, Stable Diffusion \citep{rombach2022high} all utilized discrete Autoencoders to alleviate the computation cost by downsampling images, e.g., from 256$\times$256 to 32$\times$32.

In practice, the choice of the latent plays a critical role. Stable Diffusion \citep{rombach2022high} employs the Autoencoder from VQGAN \citep{esser2021taming}, which is an improvement over VQVAE \citep{van2017neural} by incorporating adversarial training. 
The discrete Autoencoder was later substituted to continuous ones regularized with Kullback–Leibler (KL) divergence.  
ViT-VQGAN \citep{yu2021vector} further boosted VQGAN by using vision transformers \citep{dosovitskiy2020image} as the backbone and it was adopted by Parti \citep{yu2022scaling}. 
Despite the methodological advances, our theoretical understanding of the latent space is still limited. 
Important questions such as what constitutes a good latent space, what is the optimal choice, and how it depends on data remain elusive.

Ideally, the latent distribution $P_z$ should preserve the important information about the data distribution $P_x$ as much as possible so that the required effort from the model learning is minimal.  
Finding such a data-dependent latent can be viewed as a self-supervised learning (SSL) task. 
Can we borrow insights from the vast literature on SSL to understand and improve the latent? 
Unfortunately, existing methods \citep{misra2020self, chen2020simple, grill2020bootstrap, he2020momentum, he2021masked} are designed for discriminative tasks and not suited for generative modeling. 
To illustrate, consider the classic GANs \citep{goodfellow2020generative, tolstikhin2017wasserstein} where the latent distribution is usually pre-defined.
When modeling the CIFAR-10 dataset \citep{krizhevsky2009learning}, if we substitute the DCGAN's latent  \citep{radford2015unsupervised} from standard Gaussian to the features learned from SimCLR \citep{chen2020simple} with the same dimension, the Inception Score (IS) drops from 5.68 to 3.93\footnote{Calculated using reconstructed data.  
More details can be found in Section \ref{sec:vaegan}}. 
Therefore, new theoretical insights are in dire need to elucidate the ideal $P_z$ and uncover its connection to $P_x$.

In this work, we aim to provide a new understanding of the latent space in generative modeling from the angles of SSL and minimizing model complexity, where we first formalize the problem for GAN models and then generalize it to Autoencoders.

Drawing inspiration from the training objective of GANs, a novel distance between distributions in different dimensions can be defined to measure the closeness between the latent and the data (Section \ref{sec:gan_dist}). 
As is typically emphasized in learning theory \citep{vapnik1999nature, wei2019regularization, hu2021regularization, piwek2023exact},
the complexity of the generator is an important factor in this formulation, where a latent is deemed closer to the data if a generator can achieve the same (better) performance with lower (the same) complexity. 
The latent closest to the data in a GAN-induced distance is characterized as the optimal choice for that GAN, which leads to the simplest map between the latent and the data.

With our formulation of the optimal latent distribution, the immediate question is how to estimate it, which gives rise to a new SSL task for generative modeling.
Naturally, we consider utilizing an encoder network to parameterize the latent, which uncovers the popular Autoencoder architecture with the paired generator as a decoder (Section \ref{sec:encoder}).  
In the learning process, we further emphasize the importance of a relatively weak decoder and the trade-off between the informativeness of the latent and the capability of the decoder (Section \ref{sec:weakd}). 
To this end, we propose a 2-stage training scheme called Decoupled Autoencoder (DAE), where the encoder is only updated in the first stage with an auxiliary decoder and then frozen in the second stage while the actual decoder is being trained (Section \ref{sec:why2s}). 
Our theoretical analyses are corroborated by comprehensive experiments on both synthetic data and real image data with various models such as DCGAN \citep{radford2015unsupervised}, VAEGAN \citep{larsen2016autoencoding, yu2019vaegan}, VQGAN \citep{esser2021taming}, and Diffusion Transformer (DiT) \citep{peebles2022scalable}, where our modifications yield significant improvements in image quality with decreased model complexity (Section \ref{sec:exp}).

\section{Preliminary}
\paragraph{Notations.}
For a function $f:\Omega\to\mathbb{R}$, let $\|f\|_p=(\int_{\Omega} |f(\bx)|^p d\bx)^{1/p}$. 
For a vector $\bx$, $\|\bx\|_p$ denotes its $l_p$-norm.
$L_p$ and $\ell_p$ are used to distinguish function norms and vector norms.
For two positive sequences $\{a_n\}_{n\in \mathbb{N}}$ and $\{b_n\}_{n\in \mathbb{N}}$, we write $a_n\lesssim b_n$ if there exists a constant $C>0$ such that $a_n\leq C b_n$ for all sufficiently large $n$.
We use $P_x$ and $p_x$ to denote the probability distribution and density of the random variable $x$. 
$g\circ f(x):=g(f(x))$ denotes composition of functions.

\paragraph{Generative adversarial networks} are a type of implicit generative models that aim to learn transformations $g\in\cG$ from random noises $\bz$ to the data $\bx$. 
At the population level, its objective function can be expressed in general as
\begin{equation}\label{eqn:gan_obj}
    \inf_{g\in\cG}D_h(P_x, P_{g(z)}), \  \bz\sim P_z,
\end{equation}
where $P_z$ is usually pre-determined to be some simple distribution such as (mixture) Gaussian or uniform, and $D_h$ is realized by the adversarially-trained discriminator $h$ and when optimized properly, can give rise to various well-established distances, e.g., Jensen-Shannon divergence \citep{goodfellow2020generative}, maximum mean discrepancy \citep{li2017mmd}, $f$-divergence \citep{Nowozin2016fGANTG}, Wasserstein distance \citep{Arjovsky2017WassersteinGA}, etc.
Exploiting the flexible architectures of various neural networks as generators \citep{karras2020analyzing,karras2021alias}, GAN enjoys state-of-the-art performance in generating various data types \citep{karras2020analyzing, yamamoto2020parallel, Subramanian2017AdversarialGO, Yu2017SeqGANSG} with notable advantage in the sampling speed, especially compared with diffusion probabilistic models (DPMs) \citep{song2020score, ho2020denoising}.
However, GAN models often suffer from mode collapse, failing to keep the data diversity intact. 
Besides the usual suspect of the min-max adversarial training, the poor choice of latent noise distribution is also to blame \citep{feng2021uncertainty}.

\paragraph{Self-supervised learning.} 
The goal of SSL is to learn a (low-dimensional) feature representation $f(\bx)$ from unlabeled data that is suited for various discriminative downstream tasks, e.g., classification, detection, etc.  
There are mainly three categories, i.e., pretext task-based \citep{misra2020self}, contrastive methods such as SimCLR \citep{chen2020simple}, MoCo \citep{he2020momentum}, BYOL \citep{grill2020bootstrap}, SimSiam \citep{chen2021exploring}, and generative-based such as Masked Autoencoder (MAE) \citep{he2021masked}.
The learned feature map $f(\bx)$ defines a distribution over the feature space $\bz\in\RR^{d_z}$, denoted as $P_f$. 
Ideally, $P_f$ should preserve the important information about $P_x$ as much as possible. 
In this sense, representation learning in general can be seen as preserving certain "distances\footnote{We use the word ``distance" quite generally, including not only well-defined metrics, but also divergences.}" between distributions in different dimensions \citep{hu2022your, balestriero2022contrastive}.

\paragraph{Distance between distributions in different dimensions.} 
To quantitatively measure the closeness, various similarity measures between $P_f$ and $P_x$ are defined. 
Since $d_z < d$, typical distances between distributions will not work due to the different metric spaces $\bz$ and $\bx$ reside in.  
Fortunately, there are two existing tools we can turn to. 
The first is Gromov-Wasserstein distance \citep{memoli2011gromov,salmona2021gromov}, which circumvents the metric spaces mismatch by comparing pairwise joint distributions. 
Typical manifold learning methods such as stochastic neighbor embedding can be thought of as a special case \citep{hinton2002stochastic, van2008visualizing, hu2022your}. 
Second, as in \cite{cai2020distances}, popular distances such as Wasserstein-$p$ distance and $f$-divergence can all be naturally extended to such settings, either by embedding $P_z$ to $\RR^d$ or projecting $P_x$ to $\RR^{d_z}$ and taking the infimum over all orthogonal linear projections.
Specifically, for any distance between distributions $D(\cdot,\cdot)$ in the Wasserstein-$p$ or $f$-divergence family, define its generalized version to be 
\[
D^+(P_x, P_z):=\inf_{P_{\hat{x}}\in\Phi^+(P_z, d)} D(P_x, P_{\hat{x}}),
\]
where $\Phi^+(P_z, d) := \{P_{\hat{x}}:  \bA \hat{x} + \bb \sim P_z , \bA\in\RR^{d_z\times d}, \bb\in\RR^{d_z}, \bA\bA^\top=\II_{d_z}\}$ 
is the family of distributions in $\RR^d$ embedded from $P_z$ by a linear orthogonal transformations from $\RR^{d_z}\to\RR^{d}$.
Similarly, we can define 
$
D^-(P_x, P_z)
$
and \cite{cai2020distances} showed that $D^+(P_x, P_z)=D^-(P_x, P_z)$.

Contrastive learning has deep connections with manifold learning \citep{balestriero2022contrastive} in terms of preserving pairwise similarity, which can be viewed as a special case of preserving the Gromov-Wasserstein distance \citep{hu2022your}. In contrast, as we will demonstrate in this work, the optimal latent for generative modeling such as GANs is more closely related to $D^+(P_x, P_z)$.

\section{Optimal latent distribution for generative models}\label{sec:pz}
Recall the typical training objective of GANs \eqref{eqn:gan_obj} and denote its empirical version as $L_n(g|P_z)$, where $n$ is the sample size. 
For any given $P_z$, with limited capacity on the generator, $\min_g L_n(g|P_z)$ reflects the compatibility or closeness between $P_z$ and $P_x$.
If $P_{z_1}$ is a better latent than $P_{z_2}$, intuitively $\min_g L_n(g|P_{z_1}) \le \min_g L_n(g|P_{z_2})$. This can serve as an empirical and relative evaluation of the quality of GAN latents. 
Inspired by this, we can generalize the distance proposed in \cite{cai2020distances} to construct a more meaningful version specifically for implicit generative models such as GANs.

\subsection{GAN-induced distance
}\label{sec:gan_dist}
Let $\cM(\Omega)$ denote the set of all Borel probability measures on $\Omega$.  
For any distance between distributions $D(\cdot, \cdot)$ in the data space $\cM(\RR^d)$, define the generalized version associated with a generator $g\in\cG$ mapping from $\RR^{d_z}$ to $\RR^d$ as: 
\begin{align}\label{eqn:dist}
    D^{\cG}(P_z, P_x):= \inf_{g\in\cG} D(P_{g(z)}, P_x).
\end{align}
Compared with $D^+$ in \citep{cai2020distances}, $D^{\cG}$ substitutes the linear orthogonal mappings to a general function space. 
Although $D^{\cG}$ is not a valid metric as it is defined between different probability measure spaces, it can serve as a viable measurement of the closeness between $P_z$ and $P_x$.

In order for $D^\cG$ to be nontrivial, the function space $\cG$ cannot be too large, as it is known that one-dimensional distributions can approximate higher dimensional distributions arbitrarily well by neural network transformations in Wasserstein-$p$ distance \citep{yang2022capacity}. 
On the other hand, $\cG$ cannot be too small, or it may not be able to adequately extract the information from the latent distribution. 
In this work, we consider $\cG$ to be a family of neural networks with bounded complexity, i.e., $\cG_c = \{g\in\cG: C(g)\le c\}$, where $c>0$ and $C(\cdot)$ is some complexity measure to be specified. $C(\cdot)$ can be as specific as the Lipschitz constant, or as general as the size (width, depth, etc.) of the network.
\begin{remark}
    It is worth emphasizing that in \eqref{eqn:dist}, we do not consider the optimization problem but instead, focus on the existence or the global minimum of $\inf_{g\in\cG}$. A toy example is given in Appendix \ref{app:toy1} to illustrate the global minimizers of $g\in\cG$.
\end{remark}

\begin{remark}[Complexity scaling law] \label{rmk:scaling}
In the toy case of Appendix \ref{app:toy1}, the relationship between $D^{\cG_c}$ and $c$ is an embodiment of the popular \textit{scaling law} phenomenon \citep{kaplan2020scaling, zhai2022scaling} with respect to the model complexity. 
It's worth noting that the requirement on the complexity can be drastically decreased if we consider multiple steps of generation, rather than single-step push-forward transformations. 
From this perspective, one of DPM's main advantages may just be the increased complexity due to the recursive sampling process.  
\end{remark}

Directly following the definition in \eqref{eqn:dist}, we have the following propositions. 
\begin{proposition}\label{prop1}
Let $D$ be a $p$-Wasserstein metric, a Jensen-Shannon divergence, a total variation, or an $f$-divergence. Then, $D^{\cG}(P_z, P_x)$ in \eqref{eqn:dist} is zero if and only if there exists $g^*\in\cG$ such that $P_{g^*(z)}= P_x$. 
\end{proposition}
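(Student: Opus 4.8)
The ``if'' direction is immediate: if some $g^*\in\cG$ satisfies $P_{g^*(z)}=P_x$, then $D(P_{g^*(z)},P_x)=0$ because each of the listed $D$ vanishes on a pair of identical distributions, and since $D\ge 0$ the infimum in \eqref{eqn:dist} equals $0$. The substance is the ``only if'' direction, where $D^{\cG}(P_z,P_x)=0$ a priori only yields a minimizing sequence $g_n\in\cG$ with $D(P_{g_n(z)},P_x)\to 0$, and the task is to upgrade this to an exact minimizer. The plan is the standard ``compactness $+$ lower semicontinuity'' scheme, crucially exploiting that $\cG=\cG_c$ has bounded complexity.

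\emph{Compactness.} I would first argue that $\cG_c$ is precompact in the topology of uniform convergence on compact subsets of $\RR^{d_z}$, with a limit point again in $\cG_c$. When $C(\cdot)$ controls the Lipschitz constant this is Arzel\`a--Ascoli, once one also anchors the networks' values at a single point (equivalently, works with the standard bounded-weight parametrization, for which the parameter-to-function map is continuous and the admissible parameter set is compact); when $C(\cdot)$ is a size/architecture budget with bounded weights the same conclusion holds. Thus there is a subsequence $g_{n_k}$ converging, uniformly on every compact set (via a diagonal argument over an exhaustion $K_1\subseteq K_2\subseteq\cdots$ of $\RR^{d_z}$ when $P_z$ has unbounded support), to some $g^*$ with $g^*\in\cG_c$ by closedness of the constraint $C(\cdot)\le c$ under such limits.

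\emph{Weak convergence and lower semicontinuity.} Uniform-on-compacts convergence $g_{n_k}\to g^*$ gives $g_{n_k}(z)\to g^*(z)$ pointwise, hence $P_z$-almost surely, so by the continuous mapping theorem $P_{g_{n_k}(z)}\to P_{g^*(z)}$ weakly. Each $D$ in the list is lower semicontinuous with respect to weak convergence in its first argument: for $W_p$ this is the standard lower semicontinuity of optimal transport cost (finiteness is ensured by the moment bound that $C(g)\le c$ induces on $P_{g(z)}$, together with $P_x$ having a finite $p$-th moment); for total variation, Jensen--Shannon, and a general $f$-divergence with $f$ convex and lower semicontinuous, lower semicontinuity in the weak topology is classical. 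Therefore
\[
D(P_{g^*(z)},P_x)\ \le\ \liminf_{k\to\infty} D(P_{g_{n_k}(z)},P_x)\ =\ 0 .
\]
Since each listed $D$ is a genuine distance/divergence --- in the $f$-divergence case under the usual assumption that $f$ is strictly convex at $1$, so that $D(P,Q)=0\iff P=Q$ --- this forces $P_{g^*(z)}=P_x$, completing the proof.

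\emph{Main obstacle.} The only nontrivial ingredient is the compactness step: making precise the sense in which ``bounded complexity'' yields a compact family of generators. This is clean when $C(\cdot)$ is a Lipschitz bound together with a bound on the output range (or on the weights), but for a purely architectural complexity measure one must additionally assume bounded parameters so that Arzel\`a--Ascoli and continuity of the realization map apply; I would fold this into the standing assumption on $C(\cdot)$ and $\cG_c$. Everything else (lower semicontinuity of $W_p$, TV, JS, $f$-divergences under weak convergence) is off-the-shelf.
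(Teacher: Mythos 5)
Your proof is correct under reasonable assumptions, but it is substantially more careful than what the paper actually writes: the paper dismisses this proposition with a single sentence, claiming it (together with Proposition~3.2) ``directly follows the definition of $D^{\cG}$,'' and offers no argument for the ``only if'' direction. You have correctly identified that the ``only if'' direction is \emph{not} automatic from the definition of an infimum: $D^{\cG}(P_z,P_x)=0$ only produces a minimizing sequence, and converting it into an actual $g^*\in\cG$ with $P_{g^*(z)}=P_x$ requires an attainment argument. Your compactness-plus-lower-semicontinuity scheme is the right fix, and your treatment of lower semicontinuity of $W_p$, TV, JS, and $f$-divergences under weak convergence is standard and sound (with the caveats you flag, e.g., strict convexity of $f$ at $1$ for the $f$-divergence to separate points, and the moment bound making $W_p$ finite).

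That said, be aware that your proof silently strengthens the proposition's hypotheses. As the paper defines $\cG_c=\{g\in\cG:C(g)\le c\}$ for an unspecified complexity measure $C(\cdot)$, there is no guarantee that $\cG_c$ is sequentially compact in the topology you need, nor that the constraint $C(g)\le c$ is closed under uniform-on-compacts limits. You acknowledge this in your ``Main obstacle'' paragraph, and your instinct to fold the needed assumptions (bounded weights, or Lipschitz $+$ anchoring) into the standing hypotheses on $C(\cdot)$ is exactly what would be required to make the statement true. In other words: your proof is correct for a clean, explicit version of the proposition; the paper's version, read literally, is a claim that does not actually ``directly follow'' from the definition and needs the hypotheses you supply. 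This is a genuine gap in the paper's presentation, and your proof fills it rather than replicating the paper's (nonexistent) argument.

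One minor technical remark: the step where you invoke ``the continuous mapping theorem'' to pass from $g_{n_k}(z)\to g^*(z)$ $P_z$-a.s.\ to $P_{g_{n_k}(z)}\to P_{g^*(z)}$ weakly is slightly mislabeled --- almost-sure convergence of a sequence of random vectors implies convergence in distribution directly, no continuous-mapping argument is needed. The conclusion is right; just call it what it is.
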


\begin{proposition}\label{prop2}
For two distributional distances, generalizing them across different dimensions via \eqref{eqn:dist} does not change their relative relationships, e.g., 
if $D$ is Wasserstein distance, we still have 
$W_p^{\cG}(p_x,p_z) \le W_q^{\cG}(p_x,p_z),$
for $p\le q$.
\end{proposition}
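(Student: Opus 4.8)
The plan is to reduce the claim to a pointwise-in-$g$ comparison and then pass to the infimum. The structural fact I would use is that the map $D\mapsto D^{\cG}$ defined by \eqref{eqn:dist} is monotone: if two base distances satisfy $D_1(\mu,\nu)\le D_2(\mu,\nu)$ for every pair $\mu,\nu\in\cM(\RR^d)$, then for each fixed $g\in\cG$ one has $D_1(P_{g(z)},P_x)\le D_2(P_{g(z)},P_x)$, so for every $g'\in\cG$, $\inf_{g\in\cG}D_1(P_{g(z)},P_x)\le D_1(P_{g'(z)},P_x)\le D_2(P_{g'(z)},P_x)$; taking the infimum over $g'$ on the right-hand side yields $D_1^{\cG}(P_z,P_x)\le D_2^{\cG}(P_z,P_x)$. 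Note this needs no attainment of the infimum. Thus it suffices to supply the pointwise inequality between the underlying distances, and nothing about the generator class $\cG$ matters beyond it being the same on both sides.

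For the Wasserstein family, I would then invoke the classical monotonicity $W_p(\mu,\nu)\le W_q(\mu,\nu)$ for $1\le p\le q$. Concretely, let $\pi$ be an optimal coupling of $\mu$ and $\nu$ for $W_q$; since $\pi$ is a probability measure, the $L^p(\pi)$-norm of $(x,y)\mapsto\|x-y\|$ is dominated by its $L^q(\pi)$-norm (Jensen's inequality applied to the concave map $t\mapsto t^{p/q}$), giving $W_p(\mu,\nu)\le\bigl(\int\|x-y\|^q\,d\pi\bigr)^{1/q}=W_q(\mu,\nu)$, where the first step also uses that $\pi$ is an admissible (though possibly non-optimal) coupling for $W_p$. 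Combining this with the monotonicity of $D\mapsto D^{\cG}$ from the previous paragraph gives $W_p^{\cG}(p_x,p_z)\le W_q^{\cG}(p_x,p_z)$, which is the stated inequality.

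The same two-step template handles the other families mentioned (e.g.\ the ordering of $f$-divergences, or total variation versus Jensen--Shannon): once the relevant pointwise inequality between the base divergences is recalled, monotonicity of the generalization operator finishes the argument. I do not expect a genuine obstacle here; the one point deserving a moment's attention is that the direction of the inequality survives the $\inf_g$ operation, which it does precisely because \eqref{eqn:dist} takes an infimum over the \emph{same} class $\cG$ in both $D_1^{\cG}$ and $D_2^{\cG}$, so a uniform pointwise bound descends to the infima unchanged.
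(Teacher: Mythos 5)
Your proof is correct and takes the same route the paper (implicitly) does: the paper states that Propositions 3.1 and 3.2 ``directly follow the definition of $D^{\cG}$,'' and the content of ``directly follows'' is precisely your observation that a pointwise inequality $D_1\le D_2$ on $\cM(\RR^d)\times\cM(\RR^d)$ descends through $\inf_{g\in\cG}$ unchanged because the infimum ranges over the same class on both sides, combined with the classical $W_p\le W_q$ monotonicity. You have simply spelled out the two steps the paper treats as immediate.
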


Equipped with $D^\cG$ to compare different latent distributions, we can characterize the ideal $P_z$ as the one that minimizes $D^\cG(P_z, P_x)$, i.e., the latent that requires the lowest generator capacity for $L(g|P_z)$ to be below a certain threshold\footnote{More mathematical ground for this argument can be found in Appendix \ref{sec:pz_cg}.}.  
Such a latent depends on both the data and the GAN training algorithm and can be seen as an SSL task, targeting specifically minimizing the required capacity of the corresponding generator, thus easing the training of the generator. 

\begin{remark}[Dimensional collapse]
In contrast to the common \textit{dimensional collapse} phenomenon \citep{jing2021understanding} found in contrastive learning where the learned feature distribution is only supported on a low-dimensional subspace, 
minimizing $D^\cG(P_z, P_x)$ with respect to $P_z$ encourages it to maintain the intrinsic dimension of the data so that $D(P_g(z), P_x)$ is not ill-posed. The dimensional collapsed latent contributes to the poor performance of SimCLR+DCGAN mentioned in Section \ref{sec:intro}.
\end{remark}

\begin{remark}[Change-of-scale problem]
\label{rm:scaling}
    Although $D^{\cG}$ serves as a well-defined measurement of closeness between $p_z$ and $p_x$, the latent $p_z$ that minimizes $D^{\cG}(p_z, p_x)$ might be ill-posed. 
    If the complexity is not invariant to scaling, e.g., $L_p$ norm, Lipschitz constant, etc.,  $D^{\cG}(P_z(\lambda z), P_x)$ is non-increasing with $\lambda$.  
    In the next section, we impose extra constraints to circumvent this problem, and the optimal $P_z^*$ is characterized therein. 
\end{remark}

\subsection{Learning the latent by an encoder}\label{sec:encoder}
Now that we have characterized the ideal latent distribution given the generator family and the data, the next question is how to find such a good latent. 
Casting it as an estimation problem, the first decision to make is how to parameterize $P_z$. 
We could adopt Gaussian mixtures as in \cite{xiao2018bourgan}.
Nonetheless, a more universal and powerful way is to employ an encoder network $f\in\cF:\RR^d\to\RR^{d_z}$, similar to SSL methods.  
With $P_z$ parameterized by an encoder, the overall structure is an Autoencoder. In the remaining part, we also refer to the generator as the decoder and use the two words interchangeably. 

At the population level, we want to match the distributions of $\bx$ and the reconstructed $g\circ f(\bx)$, i.e.,
\begin{align*}
    \inf_{f\in\cF, g\in\cG} D(P_x, P_{g\circ f(\bx)})
    & = \inf_{f\in\cF}\rbr{\inf_{g\in\cG} D(P_x, P_{g\circ f(\bx)})} = \inf_{f\in\cF} D^{\cG}(P_x, P_{f(\bx)}).
\end{align*}
Looking at the encoder and the decoder together, we can see that \textit{optimizing $D(P_{g\circ f}, P_x)$ amounts to learning an encoder $f$ that minimizes $D^{\cG}$}.   
This strategy is employed by VQGAN \citep{esser2021taming}, where the encoder and decoder are optimized together with the discretized latent code. 
Inspired by this observation, we can characterize the optimal latent distribution $P_z^*$ for a given $P_x$ from the perspective of minimizing the distance between distributions in different dimensions by defining $P_{f^*}$ as
\begin{equation}\label{eqn:dfn1}
f^*=\argmin_{f\in\cF}D^{\cG}(P_x, P_{f(x)}).
\end{equation} 
Notice that $P_f^*$ not only depends on $P_x$ and $D$, but also $\cF$ and $\cG$. 
The change-of-scale problem in Remark \ref{rm:scaling} is mostly taken care of by the inherent boundedness of $\cF$. More discussion on the choice of $\cG$ can be found in Section \ref{sec:weakd}.
\begin{remark}[Classification]
    As the solution of a new SSL task, $f^*$ in \eqref{eqn:dfn1} will preserve well-separated clusters of $P_x$, though not as discriminative as the features learned from existing SSL methods. See Proposition \ref{prop:classification} in the appendix for more details. We also conducted experiments on simulated data evaluating the classification accuracy in Section \ref{sec:exp_toy}.
\end{remark}

With the encoder, \eqref{eqn:dfn1} is not the only option to characterize the optimal latent distribution. 
Besides directly parameterizing $P_z$, $\cF$ can also serve an auxiliary role in defining a more general $P_z^*$.   
To this end, in parallel to $D^{\cG}$, we can also define
$
    D^{\cF}(P_z, P_x):= \inf_{f\in\cF} D(P_z, P_{f(x)}).
$
As $D^\cG$ is to $D^+$, $D^\cF$ generalizes the $D^-$ in \cite{cai2020distances}. 
Combining $D^\cG$ and $D^\cF$, we get a generalized measure of distance between the latent and data for Autoencoders as 
$
     D^{\text{\ae}}(P_z, P_x) = D^{\cG}(P_x, P_z) + D^{\cF}(P_x, P_z),
$
which resembles the \textit{condition number} of matrices, but for latent distributions. 
The optimal latent distribution can also be characterized by
\begin{equation}\label{eqn:dfn2}
    P_z^*=\argmin_{P_z}D^{\text{\ae}}(P_z, P_x).
\end{equation}

If $d_z=d$, then identity mapping, i.e., $P_z^*=P_x$, obviously solves \eqref{eqn:dfn1} and \eqref{eqn:dfn2}. If like DPMs or VAEs where the latent distributions are chosen to be data-agnostic, e.g., standard Gaussian, $D^{\text{\ae}}(P_z, P_x)$ can be significantly larger for complicated data. 

\begin{remark}
    $D^{\cG}$ and $D^\cF$ can be zero, especially when $P_x$ is supported on a low-dimensional space \citep{cai2020distances}. 
    As a result, the defined $f^*$ or $P_z^*$ might not be unique. We provide a toy example in Appendix \ref{app:toy2} to illustrate how $P_z^*$ minimizes the complexity.
\end{remark}


\section{Improving the latent with Decoupled AutoEncoder}\label{sec:2stage}

In practice, the encoder and decoder family should be as powerful as possible. 
However, as stated before, $\cG$ and $\cF$ should be chosen carefully in order for the induced $D^\cG$ and $D^{\cF}$ to be meaningful.

\subsection{Necessity of relatively weak decoders}\label{sec:weakd}
Given a total budget\footnote{The budget and $C(\cdot)$ here refer to the general concept of size (width, depth, etc.) of the network. } of $\cG$ and $\cF$, intuitively, $C(\cG)=C(\cF)$ tend to result in the best approximation of $\cG\circ \cF$ to the identity mapping. This is reflected in practice where the encoder-decoder pair is often designed with symmetric architecture, with the same number of blocks and parameters, e.g., up-sampling vs. down-sampling, convolution vs. deconvolution, etc.
However, when it comes to the quality of the encoded latent, only targeting the reconstruction error may not be the ideal strategy. 
As we will demonstrate in Section \ref{sec:why2s}, a better way is to first target a good latent distribution, and then do reconstruction using the learned latent. 
Consider two symmetric cases: (larger encoder, smaller decoder) vs (smaller encoder, larger decoder). 
The reconstruction ability
may not be significantly different, although both are sub-optimal, the former tends to produce better latent distribution. See Section \ref{sec:exp_toy} for empirical evidence in the toy case comparing the two cases.

For a concrete demonstration, consider the simple linear case where the encoder and the decoder are two matrices, denoted as $W_e\in\RR^{d\times d_z}$ and $W_d\in\RR^{d_z\times d}$ respectively.  
Take the matrix $2$-norm as a measurement of complexity, which is equal to the largest singular value. A similar setting is also considered in \cite{ji2021power}.
In the linear case, the quality of the latent is equivalent to the reconstruction error and the optimal encoder should recover the principal components. The next theorem states that the Autoencoder could fail if the encoder is not large enough.

\begin{theorem}\label{thm:pca}
Let the singular values of $W_d$ be $\lambda_1\ge \ldots\ge \lambda_{d_z}$. If $\|W_e\|<1/\lambda_{d_z}$, the linear Autoencoder cannot recover the principal components, and the reconstruction error is sub-optimal. 
\end{theorem}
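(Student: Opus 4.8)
The plan is to reduce \Cref{thm:pca} to a linear-algebra fact: a linear autoencoder attains the PCA-optimal reconstruction exactly when its composite map equals the orthogonal projector onto the leading $d_z$-dimensional eigenspace of the data covariance, and realizing that projector forces $\|W_e\|\ge 1/\lambda_{d_z}$. For the first half I would let $\Sigma=\mathbb{E}[\bx\bx^\top]$ be the (centered) data covariance with eigenvalues $\sigma_1\ge\cdots\ge\sigma_d$ and a strict gap $\sigma_{d_z}>\sigma_{d_z+1}$, and write $P_V$ for the orthogonal projector onto the span $V$ of the top $d_z$ eigenvectors. Under the paper's conventions the reconstruction of $\bx$ is $\hat{\bx}=C\bx$ with $C=W_d^\top W_e^\top\in\RR^{d\times d}$ of rank at most $d_z$, and the population error is $\operatorname{tr}((\II-C)\Sigma(\II-C)^\top)$. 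Minimizing this over rank-$\le d_z$ matrices is the weighted best-rank-$d_z$ approximation problem; by the Eckart--Young--Mirsky theorem its minimum is $\sum_{i>d_z}\sigma_i$ and, because of the eigengap, is attained \emph{only} at $C=P_V$. So ``recovering the principal components'' is precisely the statement $C=P_V$, and any other $C$ gives strictly larger error.

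For the second half I would suppose $C=P_V$ and derive a contradiction. If $\lambda_{d_z}=0$ then $\operatorname{rank}(C)\le\operatorname{rank}(W_d)<d_z$, already impossible; so assume $\lambda_{d_z}>0$, i.e.\ $W_d^\top$ has full column rank. Since $\operatorname{col}(C)\subseteq\operatorname{col}(W_d^\top)$ while $\operatorname{col}(C)=\operatorname{col}(P_V)=V$ has dimension $d_z=\dim\operatorname{col}(W_d^\top)$, we get $\operatorname{col}(W_d^\top)=V$. Taking a reduced SVD $W_d^\top=\sum_{i=1}^{d_z}\lambda_i u_i v_i^\top$ with orthonormal $u_i\in\RR^d$, $v_i\in\RR^{d_z}$, the vector $u_{d_z}$ lies in $V$, so $W_d^\top W_e^\top u_{d_z}=P_V u_{d_z}=u_{d_z}$. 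Left-multiplying by $u_{d_z}^\top$ and using $u_{d_z}^\top W_d^\top=(W_d u_{d_z})^\top=\lambda_{d_z}v_{d_z}^\top$ gives $\lambda_{d_z}\,v_{d_z}^\top W_e^\top u_{d_z}=1$, whence Cauchy--Schwarz yields $1\le\lambda_{d_z}\|W_e^\top u_{d_z}\|\le\lambda_{d_z}\|W_e\|$, i.e.\ $\|W_e\|\ge 1/\lambda_{d_z}$, contradicting the hypothesis. Hence $C\ne P_V$, the principal components are not recovered, and the reconstruction error exceeds $\sum_{i>d_z}\sigma_i$.

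The covariance bookkeeping and the SVD manipulation are routine; the one step that genuinely needs care is the uniqueness claim ``attained only at $C=P_V$'' in the first half, which really uses the eigengap $\sigma_{d_z}>\sigma_{d_z+1}$ (and centered data, no bias terms) --- without an eigengap the optimal subspace, and hence ``the principal components,'' is not well defined, so I would state that assumption explicitly alongside the theorem. As a consistency check, the threshold $1/\lambda_{d_z}$ is sharp: choosing $W_e^\top=(W_d^\top)^{+}=\sum_i\lambda_i^{-1}v_i u_i^\top$ (with $W_d$ arranged so $\operatorname{col}(W_d^\top)=V$) gives $C=P_V$ with $\|W_e\|=1/\lambda_{d_z}$ exactly, so the strict inequality cannot be relaxed.
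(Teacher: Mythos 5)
Your proof is correct, and it takes a genuinely different route from the paper's. The paper first proves Lemma~\ref{lemma:pca} (a linear autoencoder can attain the PCA-optimal reconstruction error), invokes the Baldi--Hornik characterization that the optimal encoder for a fixed decoder $W_2$ is $W_1 = (W_2^\top W_2)^{-1}W_2^\top$, and then simply reads off $\|W_1\|_2 = \lambda_{d_z}^{-1}$ from the SVD of $W_2$; the conclusion that a smaller $\|W_e\|$ is sub-optimal then rests implicitly on uniqueness of the Baldi--Hornik minimizer and on the decoder's column space already matching the top eigenspace. You instead reduce the claim to a single constraint on the composite map $C = W_d^\top W_e^\top$: recovering PCA means $C = P_V$, and from $C u_{d_z} = u_{d_z}$ for a bottom left-singular vector $u_{d_z}$ of $W_d^\top$ you extract $\lambda_{d_z}\,v_{d_z}^\top W_e^\top u_{d_z} = 1$ and apply Cauchy--Schwarz to get $\|W_e\| \ge 1/\lambda_{d_z}$. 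This avoids citing Baldi--Hornik, does not need to solve for the encoder explicitly, and never presupposes that the encoder must equal the decoder's pseudoinverse. You also make explicit several points the paper leaves tacit: the eigengap $\sigma_{d_z} > \sigma_{d_z+1}$ needed for ``the principal components'' to be well defined (and for the Eckart--Young minimizer to be unique), the degenerate case $\lambda_{d_z}=0$, and the sharpness of the threshold via $W_e^\top = (W_d^\top)^{+}$. The one place where you should add a sentence is the ``attained only at $C = P_V$'' claim, which also uses $\Sigma$ having rank at least $d_z$ (and, for literal uniqueness of $C$ rather than just of $C\Sigma$, full rank of $\Sigma$) --- a caveat the paper's proof shares; otherwise the argument is complete and arguably cleaner than the original.
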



The linear case provides insights into neural networks as Autoencoders, indicating that the encoder should be \textit{relatively strong} to better learn the latent distribution.
In the more practical nonlinear cases, things become more complicated. The intuition that (larger encoder, smaller decoder) is preferred to (smaller encoder, larger decoder) for learning a more informative latent is supported by the well-known \textit{posterior collapse} phenomenon in the VAE literature \citep{alemi2018fixing, he2019lagging, razavi2019preventing, lucas2019don}, where the encoded distribution becomes completely uninformative if the decoder is too powerful. 
However, for implicit generative models such as GANs, the generator is obviously the key component and it should be as powerful as possible. 
There exists a \textit{trade-off} between generation quality and the quality of the latent.
Using a decoder too powerful can hinder the learning process of a good latent while a decoder too small will hurt the generation performance. 
In practice, whether a decoder family is weak or not can be summarized in an oversimplified way by its size, i.e., a smaller decoder is weaker.

\begin{remark}
   We could consider pairing a sufficiently large $\cG$ with an even larger $\cF$. Unfortunately, such a combination may not be a good solution. Besides the added computational cost, an encoder-decoder pair too powerful can drive $D^\cG+D^\cF$ too close to zero, so that it may get overwhelmed by random noise. The signal-to-noise ratio may be too low for effective optimization. 
\end{remark}

\begin{figure}[t]
	\centering
\includegraphics[width=0.7\textwidth]{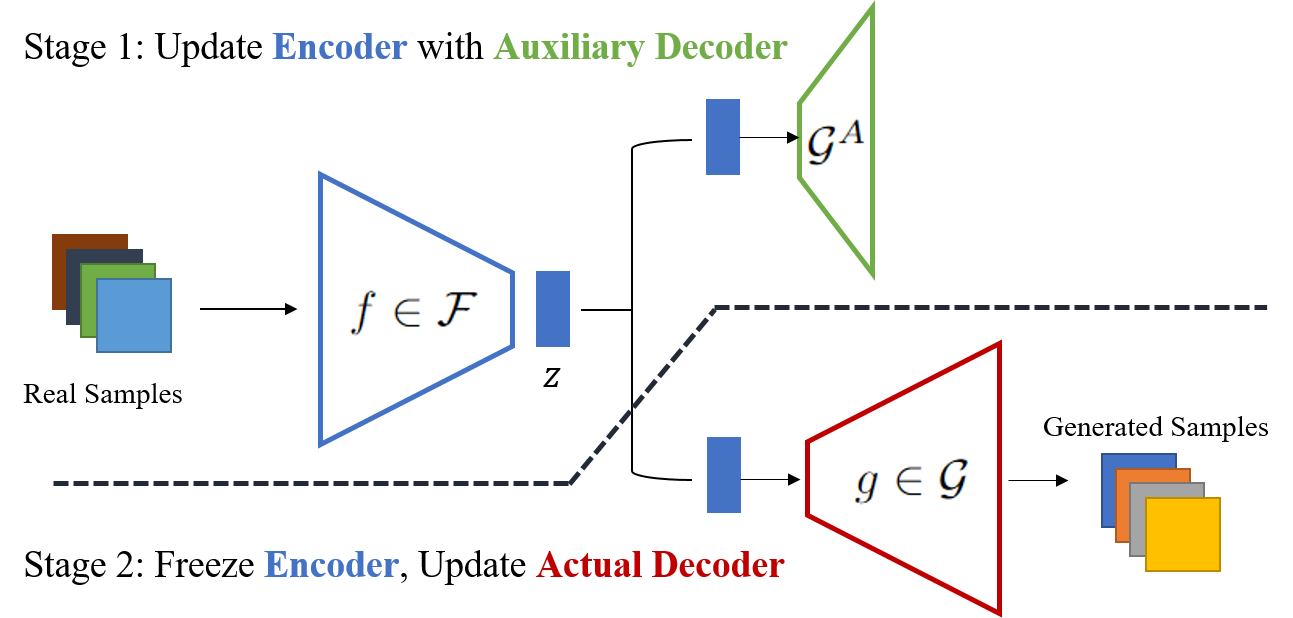}
	\caption{Overview of the DAE 2-stage training. 
DAE addresses the trade-off between generation quality and the latent quality where the encoder is only updated in the first stage with an auxiliary weaker decoder and then frozen in the second stage while the actual decoder is being trained.  }
	\label{fig:overview}
\end{figure}

\subsection{Decoupled AutoEncoder
}\label{sec:why2s}

To address the {trade-off} between generation quality and the quality of the latent, we can introduce an auxiliary decoder $\cG^A$, potentially much smaller/weaker, to substitute $\cG$ for training the encoder. 
Once the encoder is trained, we can then freeze it and pair it with $\cG$ for the decoder training. 
The key idea is to decouple the learning process of the encoder and the decoder in a 2-stage training scheme we call \textbf{Decoupled AutoEncoder} (\textbf{DAE}). Figure \ref{fig:overview} illustrates the method overview where the two stages of DAE can be summarized as:
\begin{itemize}
    \item
    DAE Stage 1: Train $\hat{f}\in\cF$ with a small decoder $\cG^A$ to learn a good latent. 
    \item
    DAE Stage 2: Freeze the trained encoder $\hat{f}$. Then train the regular decoder $g\in\cG$ to ensure good generation performance. 
\end{itemize}
By the 2-stage training, one with $\cG^A$ and one with $\cG$, without much hyperparameter tuning, DAE can learn a better latent distribution and as a result, achieve better reconstruction and generative modeling.

\begin{remark}[Realizing $\cG^{\cA}$]\label{rmk:aux}
    The essence of the first stage is to employ an auxiliary decoder that is relatively weak compared with the encoder. 
    Straightforwardly, we can introduce an extra decoder family. For an easier way of implementation, we can also use various forms of regularization to realize $\cG^{\cA}$. For instance, with a balanced encoder-decoder pair, we can apply strong Dropout \citep{srivastava2014dropout} to the decoder in the first stage to make it weaker. Both methods are experimented in Section \ref{sec:exp}.
\end{remark}

DAE can be applied in a \textbf{plug-and-play} style to any existing Autoencoder training pipelines with the decoupled 2-stage modification. We demonstrate in Section \ref{sec:exp} that compared to various baselines, e.g., VAE, VQGAN, etc., our DAE-modified versions are better in terms of both reconstruction error and quality of the latent. 
For instance, we refer to Figure \ref{fig:rec_loss} in the appendix where we experimented with VQGAN \citep{esser2021taming}. In terms of the reconstruction error, Stage 1 is worse than the baseline, which is to be expected since a weaker decoder is used. However, paired with the Stage 1 encoder, the decoder (the same size as the baseline) can achieve better reconstruction performance.

\subsection{Sampling from the latent distribution}
In most Autoencoder-based generative models, e.g., VAE,
adversarial Autoencoder \citep{makhzani2015adversarial},
Wasserstein Autoencoder \citep{tolstikhin2017wasserstein}, etc., the latent distributions are all modeled by Gaussian distributions. 
DPMs \citep{song2020score,ho2020denoising} can also be seen as a type of Autoencoder-based method with the forward noising process as the encoder and the denoising process as the decoder, with the latent distribution being Gaussian. 
As discussed in Remark \ref{rmk:scaling}, the decoding process of DPM enjoys drastically increased capacity due to the multi-step generative process. 

In our work, with $P_z$ parameterized by the encoder, how to generate new samples from it requires extra modeling of the latent. Otherwise, we can only do reconstruction. 
There are two mainstream methods to sample from the latent distribution.
The first is inspired by natural language processing. 
In VQGAN \citep{esser2021taming} and Parti \citep{yu2022scaling}, the discretized latent codes are autoregressively modeled by transformers as a sequence, similar to GPT \citep{radford2019language} while masked image modeling methods \citep{chang2022maskgit, chang2023muse} resembles BERT \citep{devlin2018bert}. 
The second is by diffusion process.  
Latent space diffusion models \citep{rombach2022high, peebles2022scalable} can be seen as using DPMs to model the latent space of a powerful Autoencoder. 
We mainly adopt the VQGAN formulation in our numerical experiments for training the Autoencoders. To validate the effectiveness of our DAE, we consider modeling the latent by both transformers and diffusion models. 

\section{Related works}
BourGAN \citep{xiao2018bourgan} proposed to model the latent distribution as Gaussian mixtures as a remedy for the mode collapse problem of GANs. 
Instance-conditioned GAN \citep{casanova2021instance} proposed to partition the data manifold into a mixture of overlapping neighborhoods described by a data point and its nearest neighbors to improve the quality of unconditional generation. 
There is also a line of work that considers post-sampling latent space mining of GANs by exploiting the trained discriminator \cite{tanaka2019discriminator, che2020your}. \cite{li2022improving} further proposed adding an implicit latent transform before the mapping function to improve the latent from its initial distribution in a bi-level optimization fashion. 
Though these works targeted improving the latent distribution of GANs, they did not characterize the optimal choice nor employ an encoder to model the latent.  
In comparison, we approach the problem from the perspective of preserving the distance between distributions in different dimensions and propose a 2-stage training scheme involving an encoder.

There are GANs that incorporate an Autoencoder structure. 
VAEGAN \citep{larsen2016autoencoding, yu2019vaegan} combines a VAE by collapsing the decoder and the generator into one. 
\cite{an2019ae} utilized a standard autoencoder to embed data into the latent space to address the mismatch of the continuous neural network and the discontinuous optimal transport (OT) map. \cite{an2020ae} later proposed to utilize semi-discrete OT maps to sample from the latent and train GAN models. 
VQGAN \citep{yu2021vector} uses vector-quantized latent codes and transformers to model the tokenized latent distribution. ViT-VQGAN \citep{yu2021vector} later made improvements over VQGAN. 
Besides using vision transformers \citep{dosovitskiy2020image} as the backbone, it also introduced an $l_2$ regularization term to the latent code. 
However, the aforementioned works did not characterize the optimal latent either and did not point out the necessity of the decoupled 2-stage training. Our DAE approach can also be applied to them for further improvement.

In the literature of VAE, \citep{tomczak2018vae} introduced new priors consisting of mixture distribution with components given by variational posteriors conditioned on learnable pseudo-inputs. 
\cite{dai2019diagnosing} proposed a two-stage learning scheme to amend the problems associated with the Gaussian encoder/decoder assumptions. 
\cite{alemi2018fixing} stated that obtaining a good ELBO is not enough for good representation learning and proposed to incorporate mutual information and rate-distortion curve to achieve a better balance between the informativeness of the latent and the reconstruction quality. 
In comparison, our characterization of the optimal latent is from the perspective of matching distributions and minimizing complexities and is not restricted to Gaussian assumptions.
It is also worth emphasizing that while the mutual information, closely related to VAEs, can also serve as a "distance" between distributions in different dimensions, it will not work for our benefit since we are considering \textit{deterministic} encoders and decoders. In the deterministic case, the conditional entropy would be zero and the mutual information would always be the entropy of the latent. Hence uniformly distributed latent is always ideal, which does not offer new insights.

Masked Autoencoder (MAE) \citep{he2021masked} is a generative-based SSL method with the training objective of reconstructing masked images. 
In MAE, a high mask ratio and a relatively weak decoder have been verified as important factors for good discriminative performance, which is also corroborated by our perspective of preserving distributions in different dimensions. 
MAEs are designed for discriminative tasks and their generation capability is yet to be explored.


\section{Experiments}\label{sec:exp}
In this section, we conduct empirical evaluations of our proposed DAE training scheme on a variety of datasets and generative models, from toy Gaussian mixture data to DCGAN on CIFAR-10, to VQGAN and DiT on larger datasets. The detailed experiment settings can be found in Appendix \ref{sec:more_exp}.

\subsection{Toy examples}\label{sec:exp_toy}
To showcase the effectiveness of the proposed 2-stage training, we consider a toy Gaussian mixture case with 8 clusters, whose centers are evenly placed on the unit circle, and the variances are 0.25. 
All samples in $\RR^2$ are then projected to $\RR^{10}$ via a linear orthogonal mapping. 
We employ the vanilla VAE \citep{kingma2013auto} for demonstration with $d_z=2$. 
In our implementation, both the encoder and the decoder are 3-layer fully connected networks with different hidden dimensions to control the size. 
Giving each cluster a label, we evaluate (1) the nearest neighbor classification accuracy from the 2D latent; (2) the nearest neighbor classification accuracy from the 10D reconstruction; and compare different configurations of the encoder and decoder pair.
The average classification accuracy is reported in Table~\ref{table:toyexp}, which corroborates our analysis in Section \ref{sec:2stage}, i.e., a relatively weak decoder can learn a better latent and the following 2-stage training can further improve reconstruction.

\begin{table*}[h]
    \begin{minipage}[t]{0.48\linewidth}
	\caption{The mean (variance) of the classification accuracy w.r.t. difference hidden dimensions of the (encoder, decoder) over 10 replications.}\label{table:toyexp}
	\centering
  \small
	{\setlength{\extrarowheight}{1.5pt}
	\begin{adjustbox}{max width=\linewidth}
	\begin{tabular}{l|cccc}
		\hline
		Hidden Dim & (64, 128) & (128, 64)  \\
		Latent Acc (\%) & 80.6 (7.0) & \textbf{87.8} (5.8) \\
		\hline
		Hidden Dim & (128, 128) & DAE (128, 128) \\
		Rec Acc (\%) & 92.2 (6.1) & \textbf{98.0} (1.3) \\
		\hline  
	\end{tabular}
    \end{adjustbox}
	}
\end{minipage}
\hfill
    \begin{minipage}[t]{0.48\linewidth}
	\caption{The performance of DCGAN with different latent distributions.}\label{table:gan_cf10}
	\centering
 \small
	{\setlength{\extrarowheight}{1.5pt}
	\begin{adjustbox}{max width=\linewidth}
	\begin{tabular}{l|cc}
		\hline
		Method & IS ($\uparrow$) & FID ($\downarrow$) \\
		\hline
		DCGAN (reproduced) & 5.68 & 51.76 \\
		DCGAN-SimCLR & 3.93 & 168.23  \\
		VAEGAN & 5.82 & 48.11  \\
  	\textbf{DAE-VAEGAN} & \textbf{6.16} &  \textbf{46.12} \\
		\hline
	\end{tabular}
    \end{adjustbox}
	}
\end{minipage}
\end{table*}

\subsection{GAN on CIFAR-10}\label{sec:vaegan}

As a more comprehensive proof-of-concept, we experiment with DCGAN~\cite{radford2015unsupervised} on the CIFAR-10~\citep{krizhevsky2009learning} dataset. 
Specifically, we consider a vanilla baseline where the generator has 5 convolutional layers and the discriminator has 4 convolutional layers, followed by a linear classification layer. 
The latent for DCGAN is standard Gaussian.
To corroborate our analysis in Section \ref{sec:2stage}, we adapted the DCGAN with varying latent space configurations: 
(1) We utilize a latent space defined by a CIFAR-10 pre-trained SimCLR model~\cite{chen2020improved}, denoting this adaptation as DCGAN-SimCLR. 
(2) We consider VAEGAN with a learnable latent space. The structures of the decoder and discriminator mirrored those of our baseline DCGAN while the encoder is implemented with a five-layer CNN.
(3) We modified VAEGAN to DAE-VAEGAN, where the width of convolutional layers in the decoder is halved in the first stage and then set back to its original structure in the second stage.

We evaluated the aforementioned four methods using the metrics of Inception Score (IS)~\cite{salimans2016improved} and Frechlet Inception Distance~\cite{heusel2017gans}. 
With the exception of DCGAN, which generates images from random noises, the other methods first encode an image into latent space, and then generate an image based on that encoding. 
To this end, we omit the latent space modeling and calculate their metrics from reconstructed images.
The results are listed in Table \ref{table:gan_cf10}, with two take-home messages: 
(1) Traditional SSL methods may not help with the latent for generative modeling;
(2) Decoupling the encoder and decoder training can improve the overall performance of generative modeling.

\subsection{VQGAN}\label{sec:exp_vqgan}


VQGAN~\citep{esser2021taming} consists of two major components. 
The first is a discrete Autoencoder, where the latent is quantized by looking up nearest neighbors from a learnable codebook.
After the encoder and decoder are trained (objective contains adversarial loss), the second component, a transformer model, learns to generate discrete codes autoregressively in the latent space, which are then mapped to images by the learned decoder.

To demonstrate the effectiveness of our DAE, we introduce the decoupled training strategy to the first component of the standard VQGAN. 
In the first stage, we implement the relatively weak auxiliary decoder by applying Dropout \citep{srivastava2014dropout} to the decoder with ratio $p$, and train encoder and the dropped decoder jointly\footnote{We also implemented the auxiliary weak decoder by leveraging an extra decoder family with the number of convolution channels halved. The overall performance is similar and we put the results in Appendix \ref{app:vq}}.
Then in the second stage, we freeze the encoder and train the full decoder without Dropout.
Training of the transformer component is the same as VQGAN. The modified model is denoted as \ours.

\paragraph{Setup.}
We evaluate \ours by comparing with the baseline VQGAN on the FacesHQ dataset, which is a combination of two face datasets CelebAHQ \citep{liu2015faceattributes} and FFHQ \citep{kazemi2014one}, with 85k training images and 15k validation images in total. In all experiments, the input image size is 256x256.
We use the official VQGAN implementation\footnote{https://github.com/CompVis/taming-transformers} and model architectures for FacesHQ. The Autoencoder is dubbed VQ-f16. 
The training process of \ours is divided equally into two stages, where in the first stage we apply 2D Dropout (channel-wise) to the decoder with ratio $p=0.5$.

\begin{table*}[h]
    \begin{minipage}[t]{0.58\linewidth}
	\caption{Reconstruction FID over FacesHQ training and validation sets, and transformer generation FID on CelebAHQ and FFHQ training sets. $\dagger$: Evaluated on the publicly available pre-trained model on FacesHQ. *: Our reproduction is based on the official VQGAN implementation.}\label{table:fid}
	\centering
  \small
	{\setlength{\extrarowheight}{1.5pt}
	\begin{adjustbox}{max width=\linewidth}
	\begin{tabular}{l|cc|cc}
		\hline
		\multirow{2}{*}{Method} & \multicolumn{2}{c|}{Reconstruction} &  \multicolumn{2}{c}{Generation} \\
		& Train & Val & CelebaHQ & FFHQ \\
		\hline
		VQGAN & 4.81$\dagger$ & 6.27$\dagger$ & 10.2 & 9.6 \\
		VQGAN* & 4.23 & 5.83 & 9.97 & 10.44 \\
		\ours & \textbf{2.01} & \textbf{3.82} & \textbf{8.58} & \textbf{8.36} \\
		\hline
	\end{tabular}
    \end{adjustbox}
	}
\end{minipage}
\hfill
    \begin{minipage}[t]{0.38\linewidth}
	\caption{Class-conditional image generation on ImageNet $256\times256$ using the DiT-L/4 model with different Autoencoders. *: Our reproduction is based on the official DiT implementation, which is consistent with the results reported in Fig.5 of \citep{peebles2022scalable}. }
 \label{table:DiTfid}
	\centering
 \small
	{\setlength{\extrarowheight}{1.5pt}
	\begin{adjustbox}{max width=\linewidth}
	\begin{tabular}{l|ccccc}
		\hline
		Method & FID &  sFID & IS  \\
        \hline
            DiT* & 35.16 & 7.33 & 39.25 \\
            \textbf{DAE-DiT} & \textbf{32.29} & \textbf{6.90} & \textbf{41.71}  \\
		\hline
	\end{tabular}
    \end{adjustbox}
	}
\end{minipage}
\end{table*}

\paragraph{Improved image reconstruction and generation.}
To quantitatively evaluate the encoder-decoder component, we train the model on FacesHQ and compute the reconstruction FID over the full training and validation splits. We further train a transformer separately for CelebAHQ and FFHQ with the same architecture as in~\citep{esser2021taming}, and compute the generation FID with 50k generated samples against the training split of the respective dataset.
As shown in Table~\ref{table:fid}, \ours achieves significantly improved FID compared to VQGAN, indicating that a stronger encoder-decoder component can be unlocked for VQGAN by our DAE approach.
We show qualitative generation results in Figure~\ref{fig:gen_samples}.
We see that models trained by \ours can reconstruct and generate images with high fidelity.
Figure~\ref{fig:rec_loss} in the Appendix shows the reconstruction losses during training. It is worth noting that while our DAE-modified VQGAN has higher reconstruction losses in the first stage, it catches up fast in the second stage and converges to a lower level than the baseline. 
This is to be expected as the first stage of DAE focuses on learning a better latent, which can ease the training of the decoder in the second stage and result in better performance overall. 

\begin{figure*}[t]
    \centering
    \subfigure[]{\includegraphics[width=0.37\textwidth]{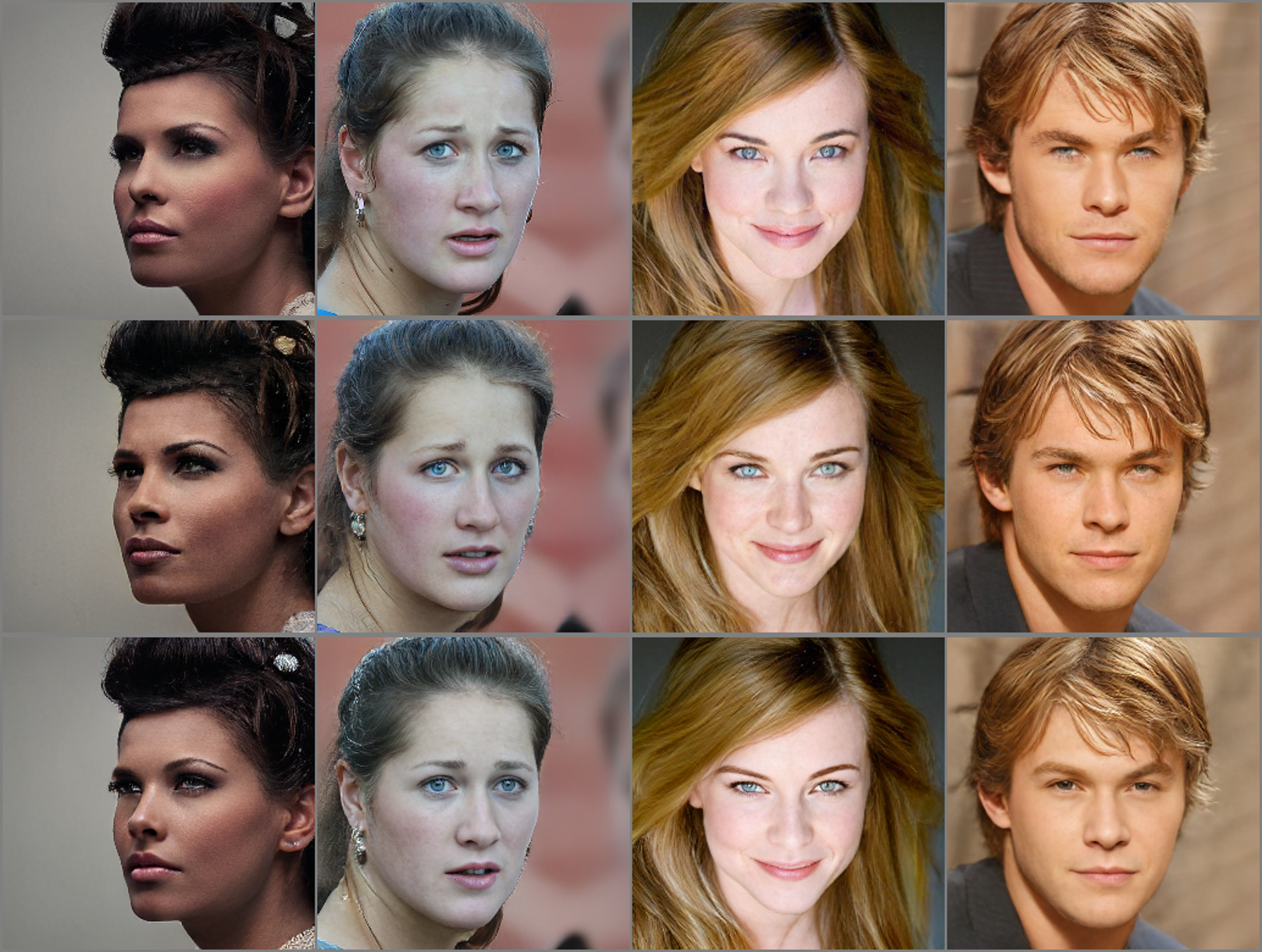}
    }
    \subfigure[]{\includegraphics[width=0.55\textwidth]{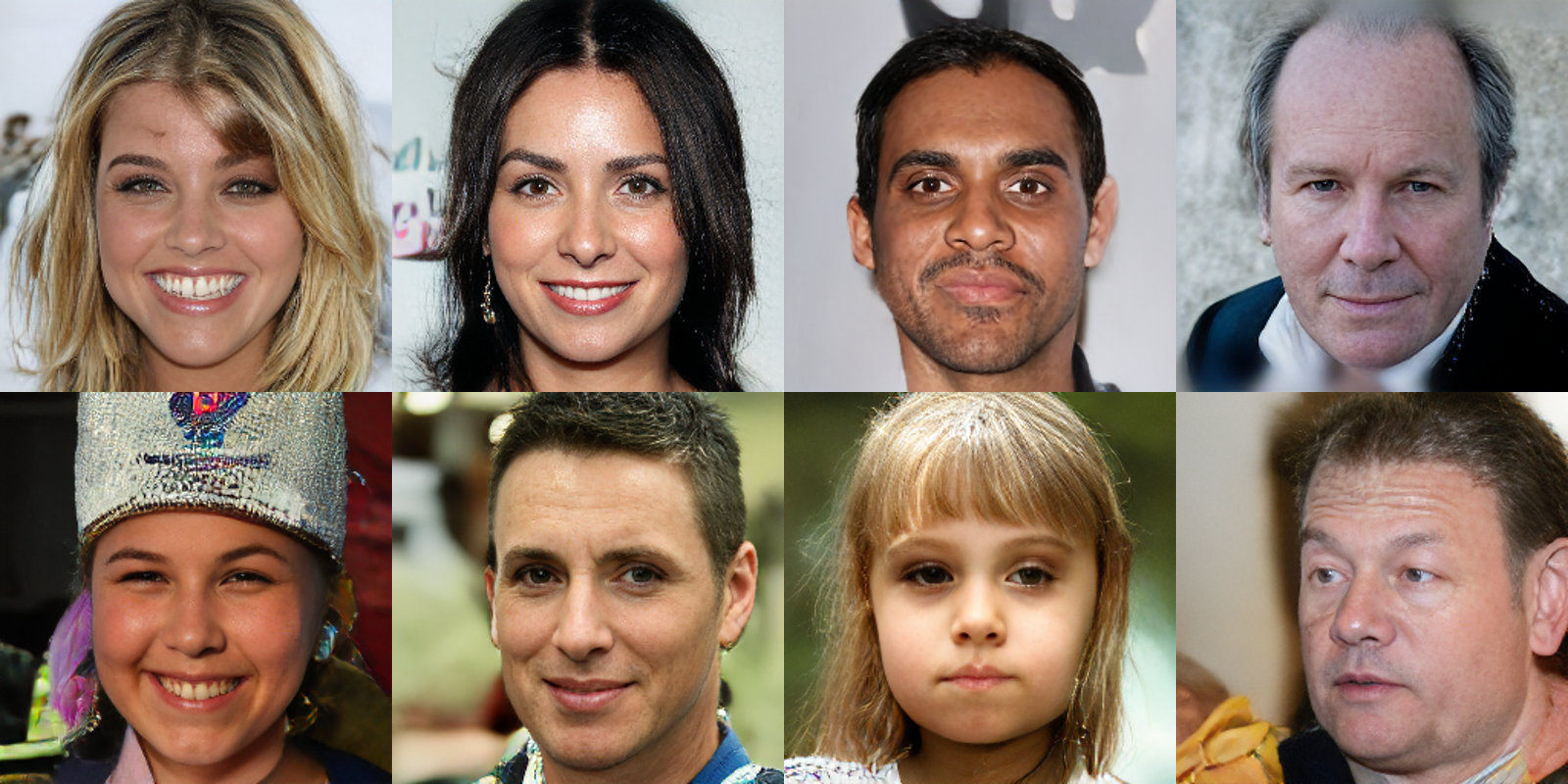}} 
    \caption{(a) Top: original image; Middle: reconstruction of VQGAN; Bottom: reconstruction of \ours. 
    (b) Generated samples of \ours.}
    \label{fig:gen_samples}
\end{figure*}

\paragraph{Decreased Complexity.}
Minimizing the model complexity is a key motivation for our DAE. To verify our claims, we evaluate the Lipschitz complexity ($C_{Lip}$ defined in \eqref{eqn:comp_lip}) of the encoder $f$ and decoder $g$ in VQGAN.
Since $g\circ f$ is trained to be approximately the identity map, the trends of $C_{Lip}(f)$ and $C_{Lip}(g)$ are expected to be the opposite, with the ideal number being close to 1 for both if we want the overall complexity $C_{Lip}(f) + C_{Lip}(g)$ to be minimized.
We observed in Figure \ref{fig:lips} that the encoder complexity of VQGAN and \ours are 1.40 and 0.88, respectively, with an increasing trend in terms of training steps; the decoder complexity of VQGAN and \ours are 0.73 and 1.16, respectively, with a decreasing trend in terms of training steps.
Compared with the baseline, our DAE-modified version produces better latent distribution with a significantly lower complexity while achieving a lower reconstruction error.
See Appendix~\ref{sec:grad_c} for more details.

\subsection{Diffusion Transformers (DiT)}
The encoder in VQGAN is discrete with a codebook. In Appendix \ref{app:code}, we look into the codebook and investigate how different training strategies affect the behavior of the learned latent. 
Our DAE can also apply to continuous Autoencoders that are usually regularized by KL divergence. Such KL-regularized AEs are adopted by latent diffusion models such as DiT \citep{peebles2022scalable}.  
Following similar settings as VQGAN, we also modified DiT to DAE-DiT by employing the decoupled training scheme and experimented with the larger ImageNet dataset \citep{deng2009imagenet}. The details can be found in Appendix \ref{app:dit}.

For evaluation, we train DiT on the ImageNet dataset with 256$\times$256 resolution following the official implementation\footnote{https://github.com/facebookresearch/DiT.}. Instead of the largest DiT-XL/2 model (675M Params), we select the DiT-L/4 model (458M Params) for higher training efficiency. The corresponding Autoencoder is dubbed KL-f8. 
AdamW \citep{loshchilov2017decoupled} optimizer is employed with a constant learning rate of $10^{-4}$ and a weight decay of $3\times10^{-2}$. The batch size is 1024, and the number of epochs is 120. The trained model with Exponentially Moving Average (EMA) is then used to generate 50k images of the 1000 categories equally via a 250-step DDPM sampling \citep{ho2020denoising} and a classifier-free guidance scale of 4. 
Table \ref{table:DiTfid} compares the image generation quality where we can see that our DAE-DiT achieves significant improvement, similar to what we observed for DAE-VQGAN.

\section{Discussion}
This work investigates the ideal latent distribution for generative models. We introduce a novel distance between distributions to characterize the ideal $P_z$ that minimizes the required model complexity. 
Practically, we propose a two-stage training scheme called DAE that achieves practical improvements on various models such as GAN, VQGAN, and DiT. 
Since many of the most powerful generative models are associated with a latent space, the impact of such investigations is potentially very high.  

Nevertheless, there are many limitations of our work that call for further research along this line. 
First, our formulation of the optimal latent distribution is based on $D^\cG$,$D^\cF$ and $D^{\text{\ae}}$, which serve mainly illustrative purposes. The proposed distances cannot be effectively calculated. 
Second, the effectiveness of DAE is not proven mathematically and the resulting latent is not guaranteed to be closer to $P_z^*$. 
Our work could be further strengthened if the aforementioned limitations can be addressed. 
It would also be an interesting direction to explore how our method affects latent space disentanglement \citep{locatello2019challenging, harkonen2020ganspace}.

\bibliography{reference}

\newpage
\appendix
\section{Technical details}

\subsection{Toy example 1}\label{app:toy1}
To illustrate the interplay between $D^{\cG}$ and $\cG$, we devise a one-dimensional toy example where $x\sim N(0, \sigma^2)$, $z\sim U(0, 1)$, $g:\RR\to\RR$ is monotonically increasing and $C(g)$ is the Lipschitz constant upper bounded by $c$.
In this case, the optimal transformation is $g^*=P_x^{-1}$, whose derivative around 0 or 1 is exponentially dependent on $\mu/\sigma$. 
Due to the Lipschitz constraint, 
by the change of distribution formula, we have   
\[
p_{g(z)}(x) = \II_{\{g^{-1}(x)\in(0, 1)\}}\cdot \nabla g^{-1} \ge \frac{1}{c}. 
\]
From the perspective of regression, the optimal $\ell_2$-projection of $g^*$ to $\cG_c$ is 
\begin{equation*}
    \hat{g}(z)= \begin{cases}
  g^*(z)  & \text{if } \ \nabla g^*(z)\le c, \\
  c(z- \frac{1}{2}) & \text{else}.
\end{cases}
\end{equation*}
Let $z_c=\sqrt{2\sigma^2\log(c/2\pi)}$ and it is clear that $\nabla g^*(z_c) = c$, or equivalently, $p_x(z_c)=1/c$. 
As a result, the push-forward distribution is a truncation of $P_x$ supported on $|x|\le z_c + c\cdot P_x(-z_c)$, i.e., $P_{\hat{g}}=P_x$ if $p_x\ge 1/c$ and $1/c$ otherwise, which is equivalent to minimizing the total variation (TV) between $P_g$ and $P_x$.
In this oversimplified example, it is easy to see the scaling between $c$ and $D^{\cG_c}$, i.e.,  $TV^{\cG_c}(P_z, P_x)\lesssim P_x(-z_c)$, which is polynomial with $c$ in the Gaussian case.

\subsection{Toy example 2}\label{app:toy2}
To illustrate the characterization of $P_z^*$ and how it minimizes the complexity, consider a toy example where $d_z=1$, $\bx \sim N(0,\Sigma_d)$ and both $\cG$ and $\cF$ are linear functions with bounded Lipschitz constant $c>1$. Denote the singular values of $\Sigma_d$ as $\lambda_1\ge\ldots\ge\lambda_d$. 
In this case, we have $C(P_z^*)=0$ and $P_z^*=N(0,\sigma^2)$ with $\sigma\in(\sqrt{\lambda_n}, \sqrt{\lambda_1})$.

\begin{lemma}[Example 6.1 in \cite{cai2020distances}]
If $D$ is Wasserstein-2 metric, 
\begin{equation*}
    D^-(N(0,\sigma^2), N(0,\Sigma))=
    \begin{cases}
  \sqrt{\lambda_n} -\sigma & \text{if} \ \sigma<\sqrt{\lambda_n}, \\
  0 & \text{if} \ \sqrt{\lambda_n}\le\sigma\le\sqrt{\lambda_1},\\
  \sigma-\sqrt{\lambda_1} & \text{if} \  \sigma>\sqrt{\lambda_1},
\end{cases}
\end{equation*}
where $\lambda_1\ge\cdots\ge\lambda_d$ are the singular values of $\Sigma$.
\end{lemma}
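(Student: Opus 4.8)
The plan is to unfold the definition of $D^-$ in this Gaussian special case and reduce the whole statement to an elementary one-dimensional minimization. By construction, $D^-\big(N(0,\sigma^2),N(0,\Sigma)\big)$ compares the one-dimensional law $N(0,\sigma^2)$ with the family of laws obtained by pushing the $d$-dimensional $N(0,\Sigma)$ through the admissible orthogonal maps $\bx\mapsto\bA\bx+\bb$ with $\bA\in\RR^{1\times d}$, $\bb\in\RR$ and $\bA\bA^\top=1$; writing $\bA=\bv^\top$ with $\|\bv\|_2=1$, the pushforward of $N(0,\Sigma)$ under $\bx\mapsto\bv^\top\bx+\bb$ is the Gaussian $N(\bb,\bv^\top\Sigma\bv)$. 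Hence
\[
D^-\big(N(0,\sigma^2),N(0,\Sigma)\big)=\inf_{\|\bv\|_2=1,\;\bb\in\RR} W_2\big(N(0,\sigma^2),\,N(\bb,\bv^\top\Sigma\bv)\big).
\]
If one prefers, one may instead invoke $D^+=D^-$ from \cite{cai2020distances} and argue on the embedding side; either orientation leads to the same one-dimensional problem.

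Next I would substitute the closed form for the $2$-Wasserstein distance between univariate Gaussians, $W_2\big(N(m_1,\sigma_1^2),N(m_2,\sigma_2^2)\big)^2=(m_1-m_2)^2+(\sigma_1-\sigma_2)^2$, which one can cite or derive in two lines via quantile functions (both quantile functions being scalar multiples of $\Phi^{-1}$). This immediately forces $\bb=0$ to be optimal and leaves
\[
D^-\big(N(0,\sigma^2),N(0,\Sigma)\big)=\inf_{\|\bv\|_2=1}\big|\,\sigma-\sqrt{\bv^\top\Sigma\bv}\,\big|.
\]

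The remaining step is to pin down the range of the Rayleigh quotient $\bv\mapsto\bv^\top\Sigma\bv$ over the unit sphere. Since $\Sigma$ is a covariance matrix, it is symmetric positive semidefinite and its singular values coincide with its eigenvalues $\lambda_1\ge\cdots\ge\lambda_n\ge0$; by Courant--Fischer the quotient attains its minimum $\lambda_n$ and maximum $\lambda_1$ at the corresponding eigenvectors, and since the unit sphere is connected and the quotient continuous, the intermediate value theorem shows that its image is exactly the closed interval $[\lambda_n,\lambda_1]$. Consequently $\big\{\sqrt{\bv^\top\Sigma\bv}:\|\bv\|_2=1\big\}=[\sqrt{\lambda_n},\sqrt{\lambda_1}]$, the infimum above equals the distance from the scalar $\sigma$ to this interval,
\[
D^-\big(N(0,\sigma^2),N(0,\Sigma)\big)=\operatorname{dist}\big(\sigma,[\sqrt{\lambda_n},\sqrt{\lambda_1}]\big),
\]
and evaluating this distance gives $\sqrt{\lambda_n}-\sigma$ when $\sigma<\sqrt{\lambda_n}$, $0$ when $\sqrt{\lambda_n}\le\sigma\le\sqrt{\lambda_1}$, and $\sigma-\sqrt{\lambda_1}$ when $\sigma>\sqrt{\lambda_1}$, exactly as claimed; compactness of the interval additionally shows the infimum is attained.

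There is no genuine obstacle here — the statement is essentially Example 6.1 of \cite{cai2020distances} — so the only points needing care are organizational: (i) fixing the orientation of $D^-$ so that it is the higher-dimensional $N(0,\Sigma)$ that gets projected onto one-dimensional subspaces, which is what makes $\Sigma$'s extreme eigenvalues appear; and (ii) checking that the admissible orthogonal maps produce exactly the family $\{N(\bb,t):\bb\in\RR,\,t\in[\lambda_n,\lambda_1]\}$ and nothing outside it, so that the reduction to $\inf_{\|\bv\|_2=1}|\sigma-\sqrt{\bv^\top\Sigma\bv}|$ is tight. Everything else is the routine univariate-Gaussian $W_2$ computation.
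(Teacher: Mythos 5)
Your proof is correct. The paper itself gives no proof of this lemma---it is quoted directly as Example~6.1 of \cite{cai2020distances}---and your derivation (reduce $D^-$ to projecting $N(0,\Sigma)$ onto unit vectors, apply the closed-form univariate Gaussian $W_2$ to force $\bb=0$, and identify the range of the Rayleigh quotient as $[\lambda_n,\lambda_1]$ so the infimum is the distance from $\sigma$ to $[\sqrt{\lambda_n},\sqrt{\lambda_1}]$) is exactly the standard argument behind that example, with the edge cases (degenerate $\Sigma$, attainment of the infimum) handled correctly.
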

 
As a direct corollary of the above lemma, we have $C(P_z^*)=0$ and $P_z^*=N(0,\sigma)$ with $\sigma\in(\sqrt{\lambda_n}, \sqrt{\lambda_1})$.

\subsection{Discussion on $P_z^*$ and $C(g)$}\label{sec:pz_cg}

The ideal $P_z^*$ is characterized as the one that minimizes $D^G(P_z,P_x)$. 
The link between its definition and the capacity of the generator can be made mathematically sound with an extra assumption. 
\begin{assumption}\label{assp}
    For any $P_z$ and $P_x$, $D^{G_c}(P_z, P_x)$ is continuous and monotonically decreasing with c. 
\end{assumption}

Assumption \ref{assp} is not provable for general $D(\cdot, \cdot)$ and $C(\cdot)$. Nevertheless, we give proof that the ideal $P_z^*$ that minimizes $D^G(P_z,P_x)$ will give rise to the minimal complexity generator if it holds. 

\begin{proof}
First, recall that $G_c:=\{g\in G: C(g)\le c\}$ where $C(g)$ is defined as some complexity measurement. It is easy to see that for any $c_2 \ge c_1>0$, $G_{c_1} \subset G_{c_2}$. Therefore, $D^{G_{c_2}}(P_z, P_x) \le D^{G_{c_1}}(P_z, P_x)$.

Consider a generator family $G$ with bounded complexity and we target to have the generated distribution close to the data at level $\epsilon$ measured by $D$, i.e., $D(P_x, P_{g(z)})\le \epsilon$ for some $P_z$ and $g\in G$. 

For a certain (non-degenerate) $P_z$, there exists $c>0$ such that $D^{G_c}(P_z, P_x) = \epsilon$, that is, by using this latent $P_z$, we need at least complexity $c$ to achieve the $\epsilon$ goal. 
If $P_z$ is not ideal, by definition we know that there exists $P_z^*$ that $D^{G_c}(P_z^*, P_x)=\epsilon^* < \epsilon$. By Assumption \ref{assp}, we know there exists $c^\prime< c$ such that $D^{G_{c^\prime}}(P_z^*, P_x) = \epsilon$. 
This means that the goal can be achieved using a lower complexity generator if the latent is ideal. 
\end{proof}

\subsection{Proof of claims in Section \ref{sec:pz}}

Proposition \ref{prop1} and \ref{prop2} directly follow the definition of $D^\cG$. 

$D^\cG$ is to $D^\cF$ as $D^+$ is to $D^-$.  
Due to the enlarged function space, the equality $D^+=D^-$ no longer holds. Directly following Lemma 2.1 in \cite{cai2020distances}, we have the following inequality:  
$D^{\cG}(P_z, P_x) \le D^{\cF}(P_z, P_x)$ where $D$ can be a $p$-Wasserstein metric or an $f$-divergence.

\begin{proposition}[Classification guarantee for clustered data]\label{prop:classification}
    Suppose $P_x$ has $m$ disjoint supports. Then the corresponding $P_z^*$ must preserve all the clusters.
\end{proposition}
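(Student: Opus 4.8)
The plan is to argue by contradiction: if $P_z^*$ fails to separate two of the clusters, then the optimal decoder $g$ is forced to have larger complexity than an alternative latent that keeps the clusters apart, contradicting the defining property of $P_z^*$ as a minimizer of $D^{\text{\ae}}$ (equivalently $D^{\cG}+D^{\cF}$). First I would set up notation: write $P_x = \sum_{i=1}^m w_i P_x^{(i)}$ where the supports $S_1,\dots,S_m \subset \RR^d$ are pairwise disjoint (in fact separated by a positive distance, or at least topologically disconnected), and let $f\in\cF$ be an encoder achieving (or approaching) the infimum in $D^{\cG}(P_x, P_{f(x)})$. The key observation is that since $g\circ f$ must reconstruct $P_x$ in the relevant distance $D$ (Wasserstein-$p$, $f$-divergence, etc.), and $D$ metrizes a topology fine enough that the reconstructed measure must place its mass near the original supports, the images $f(S_i)$ in latent space, together with the action of $g$, must ``re-sort'' points into the correct cluster.

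The central step is the complexity argument. Suppose $f(S_i) \cap f(S_j)$ has positive measure (or more precisely, $\overline{f(S_i)}\cap \overline{f(S_j)} \neq \emptyset$ in a way that forces $g$ to map nearby latent points to far-apart data points). Then on any path in latent space connecting a point of $f(S_i)\setminus f(S_j)$ to the overlap region and back out into $f(S_j)\setminus f(S_i)$, the decoder $g$ must traverse from $S_i$ to $S_j$, and since $\mathrm{dist}(S_i,S_j) = \delta_{ij} > 0$ while the latent path can be made arbitrarily short near the overlap, $g$ cannot be continuous there — or, if we measure complexity via the Lipschitz constant $C_{Lip}$, its complexity blows up as the overlap tightens. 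By contrast, consider the modified latent $\tilde P_z$ obtained by translating the encoded clusters apart (e.g. composing $f$ with a piecewise-constant shift that separates $f(S_i)$ from $f(S_j)$ — this is still realizable by an encoder in a reasonable $\cF$, or at the level of distributions directly): then a decoder of strictly lower complexity reconstructs $P_x$ equally well, so $D^{\cG}(P_x,\tilde P_z) < D^{\cG}(P_x, P_z^*)$ (invoking Assumption~\ref{assp}-style monotonicity, or just the explicit $\cG_c$ filtration), and one checks $D^{\cF}$ does not increase. This contradicts optimality of $P_z^*$ in \eqref{eqn:dfn2}.

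I would then handle the $D^{\cF}$ term and the $d_z$ versus intrinsic-dimension bookkeeping: separating clusters in latent space only requires $d_z \ge 1$ extra degree of freedom (a one-dimensional ``cluster index'' coordinate suffices to disjointify up to countably many clusters), so the construction is feasible whenever $\cF$ and $d_z$ are rich enough to represent the intrinsic structure at all — which is the standing assumption that makes $P_z^*$ well-posed (the dimensional-collapse remark). The $D^{\cF}=\inf_{f} D(P_z, P_{f(x)})$ term is, if anything, helped by a latent whose clusters mirror those of $P_x$, since the identity-like encoder is then near-optimal.

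The main obstacle I anticipate is making the ``$g$ must have large complexity on the overlap'' step fully rigorous for the weak distances $D$ in play: Wasserstein and $f$-divergence control the reconstructed \emph{distribution} but not pointwise behavior of $g$, so a decoder could in principle ``smear'' a small amount of mass across the gap between $S_i$ and $S_j$ without paying much in $D$ while still having bounded complexity — one has to argue that reconstructing \emph{all but an $\epsilon$ fraction} of each cluster's mass already forces the separation, and quantify how the required $C(g)$ diverges (polynomially in $1/\delta_{ij}$ and the overlap measure) so that it eventually exceeds the cost of the separated alternative. This is where I would lean on Theorem~\ref{thm:pca}'s philosophy (an underpowered map cannot recover the structure) and on the filtration $\cG_c \subset \cG_{c'}$ together with Assumption~\ref{assp}; absent those, I would at minimum prove the qualitative statement — that a latent with \emph{overlapping} cluster supports is strictly dominated — which is exactly ``$P_z^*$ must preserve all the clusters.''
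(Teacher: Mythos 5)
Your argument takes a genuinely different route from the paper's, and it carries a gap that you yourself flagged and that the paper avoids by design. The paper never invokes decoder complexity in this proof. It specializes $D$ to the KL divergence, works in the linear $D^{\pm}$ framework of \cite{cai2020distances} (so the candidate $\hat x$ is the image of $z$ under a continuous linear embedding), and argues in two steps. First, the KL optimizer must satisfy $\mathrm{supp}(P_{\hat x}) = \mathrm{supp}(P_x)$: if $P_{\hat x}$ misses any of $\mathrm{supp}(P_x)$ the divergence is infinite, and any mass $P_{\hat x}$ places outside $\mathrm{supp}(P_x)$ can be relocated inside to strictly decrease the objective; hence $\hat x$ has exactly $m$ components. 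Second, since $\hat x$ is a continuous image of $z$, $z$ cannot have fewer than $m$ components, and applying the same reasoning to the dual identity $\inf D_{KL}(P_x, P_{\hat x}) = \inf D_{KL}(P_z, P_{\hat z})$ caps the count at $m$ from above. In short: pure support matching plus topology, no complexity.

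Your proof instead seeks a contradiction via a Lipschitz/complexity blowup of $g$ when latent clusters overlap, leaning on Assumption~\ref{assp}, the $\cG_c$ filtration, and Theorem~\ref{thm:pca}. That fits the paper's broader narrative about $D^{\cG_c}$, but the ``smearing'' obstacle you raise is exactly why this route is hard to close: for Wasserstein or a generic $f$-divergence, a decoder with bounded Lipschitz constant can afford to misplace a small fraction of mass across the inter-cluster gap without $C(g)$ diverging, so the contradiction is not automatic. You would need a quantitative lower bound on $C(g)$ in terms of overlap mass and gap $\delta_{ij}$, together with a verification that the separated alternative latent strictly decreases $D^{\text{\ae}}$ within the same budget, and also that this alternative is realizable in $\cF$ --- none of which the paper's proof has to do, because KL makes support mismatch catastrophic rather than merely costly. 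If you want to keep the complexity framing, the quickest fix is to also specialize $D$ to a divergence with a hard support penalty (KL or total variation), which forces the decoder's image to land in $\bigcup_i S_i$ outright and essentially collapses your argument onto the paper's.
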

\begin{proof}

Let $D_{KL}$ be the KL divergence (for example). Suppose $P_x$ has $m$ disjoint supports. Then 
\begin{align*}
    D_{KL}(P_x,P_{\hat x}) = \sum_{k=1}^m \int_{X_k} p_x(x)\log\left(\frac{p_x(x)}{p_{\hat x}(x)}\right)dx.
\end{align*}
To minimize $D_{KL}(P_x,P_{\hat x})$, clearly $p_{\hat x}(x)=0$ implies $p_x(x)=0$. As $p_{\hat x}(x)$ increases, $D_{KL}(P_x,P_{\hat x})$ decreases. Since $P_{\hat z}$ can be selected according to our will, if $p_x(x)=0$ but $p_{\hat x}(x)\neq 0$, it is possible to build $\tilde p_{\hat z}$ such that $\tilde p_{\hat x}(x)\geq p_{\hat x}(x)$ for all $x$, and $1 = \sum_{k=1}^m \int_{X_k} \tilde p_{\hat x}(x)dx > \sum_{k=1}^m \int_{X_k} p_{\hat x}(x)dx$, which implies there exists a positive measure such that $\tilde p_{\hat x}(x) > p_{\hat x}(x)$. This implies $p_{\hat x}(x)$ cannot minimize $D_{KL}(P_x,P_{\hat x})$. Thus, we must have if $p_x(x)=0$, then $p_{\hat x}(x)= 0$. This further implies that $z$ must have at least $m$ disjoint supports. If $z$ has less than $m$ disjoint supports, then $\hat x$ must have less than $m$ disjoint supports, because the linear map is continuous. Then there must exists $p_x(x)=0$ but $p_{\hat x}(x)\neq 0$. 

In \cite{cai2020distances}, it has been proved that $\inf D_{KL}(P_x,P_{\hat x}) = \inf D_{KL}(P_z,P_{\hat z})$. Following the same logic, the number of supports of $\hat z$ must be smaller than the number of supports of $x$, which implies the number of supports of $\hat z$ must be smaller than $m$. Combining these results with the previous one, we must have the number of supports of $\hat z$ is $m$.
    
\end{proof}

\subsection{Proof of claims in Section \ref{sec:2stage}}
Before we prove Theorem \ref{thm:pca}, we introduce the following lemma on linear auto-encoder. 
\begin{lemma}\label{lemma:pca}
    Consider two optimization problems:
\begin{align}
    \min_{W\in \RR^{m\times n}}\|X-WW^TX\|_F^2,
\end{align}
which corresponds to Principle Component Analysis (PCA), and
\begin{align}
    \min_{W_1,W_2\in \RR^{m\times n}}\|X-W_2W_1X\|_F^2,
\end{align}
which is a linear auto-encoder. Let $W^*$ be the solution to PCA and $W^*_1,W^*_2$ be the solution to linear auto-encoder, and 
\begin{align}
    L_1 = \|X-W^*(W^*)^TX\|_F^2, L_2 = \|X-W_2^*W_1^*X\|_F^2.
\end{align}
Then, $L_1=L_2$.
\end{lemma}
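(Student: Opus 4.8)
The plan is to show that both optimization problems have the same optimal value by establishing that the linear autoencoder cannot do better than PCA, and that PCA is achievable by a linear autoencoder. The key observation is that $W_2 W_1$ is a matrix of rank at most $\min(m,n)$, and since $W_1, W_2 \in \mathbb{R}^{m\times n}$ with (presumably) $n \le m$ being the ambient dimension and $m$ the latent dimension —  I would first pin down the shapes so that $W_2 W_1 X$ makes sense and $W_2 W_1$ is an $n\times n$ matrix of rank at most $m$. Then the second problem is exactly $\min_{\mathrm{rank}(M)\le m} \|X - MX\|_F^2$, and the first problem restricts $M$ to the subclass of orthogonal projections $WW^\top$ of rank $\le m$.

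First I would recall (or invoke, via the Eckart–Young–Mirsky theorem applied to the data matrix, or via the standard PCA argument) that the unconstrained rank-$\le m$ minimizer of $\|X - MX\|_F^2$ is attained by the orthogonal projection onto the top-$m$ left singular subspace of $X$. Concretely, writing the SVD $X = U\Sigma V^\top$, the minimal value is $\sum_{i>m}\sigma_i^2$, attained at $M = U_m U_m^\top$ where $U_m$ collects the first $m$ columns of $U$. Since $U_m U_m^\top$ has the form $W W^\top$ with $W = U_m$ having orthonormal columns, this $M$ is feasible for the PCA problem, so $L_1 \le \sum_{i>m}\sigma_i^2$; and since the PCA feasible set is contained in the rank-$\le m$ set, $L_1 \ge \sum_{i>m}\sigma_i^2$. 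Hence $L_1 = \sum_{i>m}\sigma_i^2$.

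Next, for the autoencoder problem, on one hand $W_2 W_1$ ranges over a subset of rank-$\le m$ matrices, so $L_2 \ge \sum_{i>m}\sigma_i^2 = L_1$. On the other hand, taking $W_1 = U_m^\top$ (shaped $m\times n$) and $W_2 = U_m$ (shaped $n\times m$) — here I would just be careful that the lemma as stated writes both as $m\times n$, which forces $m = n$ or a transpose convention, so I would either adopt the natural rectangular shapes or note that the statement should read $W_1\in\mathbb{R}^{m\times n}$, $W_2\in\mathbb{R}^{n\times m}$ — gives $W_2 W_1 = U_m U_m^\top$, achieving value $\sum_{i>m}\sigma_i^2$. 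Therefore $L_2 \le L_1$, and combined with the reverse inequality, $L_1 = L_2$.

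The main obstacle, and the only genuinely delicate point, is the shape bookkeeping in the lemma statement: as literally written, both $W$ in PCA and $W_1, W_2$ in the autoencoder are $m\times n$, which only type-checks if $m=n$ (in which case the claim is nearly trivial since then one can take $W_2 W_1 = WW^\top$ directly) — so I would state up front the intended convention ($n$ = data dimension, $m$ = latent/bottleneck dimension, $m \le n$, $W_1$ reduces $n \to m$ and $W_2$ lifts $m \to n$) and then the argument above goes through cleanly. A secondary point worth a sentence is justifying that the rank-$\le m$ unconstrained minimum of $\|X - MX\|_F^2$ is attained by an orthogonal projector rather than some non-symmetric low-rank $M$; this follows because for any $M$ of rank $\le m$, projecting $X$ orthogonally onto the column span of $M$ (a subspace of dimension $\le m$) can only decrease $\|X - MX\|_F^2$, reducing to minimizing over orthogonal projectors of rank $\le m$, which is the PCA problem. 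Everything else is routine.
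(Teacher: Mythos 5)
Your proof is correct, and it reaches the same conclusion by a genuinely different route than the paper. The paper fixes $W_2$, invokes the Baldi--Hornik result that the optimal $W_1$ is the pseudoinverse $(W_2^\top W_2)^{-1}W_2^\top$, and then uses the SVD of $W_2^*$ to show that $W_2^*W_1^*$ is the orthogonal projector $U_m U_m^\top$, whence $L_2\ge L_1$ because PCA is the best rank-$\le m$ orthogonal projector; the easy inequality $L_2\le L_1$ comes from feasibility of $W_1^\top=W_2=W^*$. You instead go through the unconstrained rank-$\le m$ problem: you observe that the linear-autoencoder feasible set sits between the PCA feasible set and the full set of rank-$\le m$ matrices, and that the rank-$\le m$ optimum is already achieved by an orthogonal projector (your column-span argument, or Eckart--Young), so all three problems have the same optimal value. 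Your route is arguably cleaner: it avoids the Baldi--Hornik citation, avoids the SVD bookkeeping, and sidesteps an implicit assumption in the paper's proof that $W_2^*$ has full column rank so that $(W_2^{*\top}W_2^*)^{-1}$ exists (a gap the paper does not address; the pseudoinverse patches it, but your argument needs no patch). The paper's route, by contrast, yields the explicit form of the optimal $W_1$ given $W_2$, which it then reuses in the proof of Theorem~\ref{thm:pca} to read off the operator norms $\|W_1\|_2=\lambda_m^{-1}$ and $\|W_2\|_2=\lambda_1$ — so the extra machinery is not wasted in context. Your observation about the shape inconsistency in the lemma statement ($W_1,W_2$ both declared $m\times n$) is a fair catch; the paper's own proof silently uses $W_2\in\mathbb{R}^{n\times m}$ via its SVD, consistent with your proposed fix.
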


\begin{proof} 
Clearly, $L_2\leq L_1$, because we can set $W_1^T=W_2=W^*$, which leads to
\begin{align*}
    L_2 = \|X-W_2^*W_1^*X\|_F^2 \leq \|X-W^*(W^*)^TX\|_F^2 = L_1.
\end{align*}
Therefore, it suffices to show $L_1\leq L_2$. 

For a fixed $W_2$, By \cite{baldi1989neural}, the optimal solution with respect to $W_1$ is 
\begin{align*}
    W_1 = (W_2^TW_2)^{-1}W_2^T.
\end{align*}
Therefore,
\begin{align*}
    L_2 = \|X-W_2^*((W_2^*)^TW_2^*)^{-1}(W_2^*)^TX\|_F^2.
\end{align*}
Let the singular value decomposition of $W_2^*$ be 
\begin{align*}
    W_2^* = U\Sigma V,
\end{align*}
where $U\in \RR^{n\times n}$ and $V\in \RR^{m\times m}$ are orthogonal matrices, and $\Sigma\in \RR^{n\times m}$ is a diagonal matrix with singular values $\lambda_1\ge\ldots\ge\lambda_m$. Therefore,
\begin{align*}
   W_2^*((W_2^*)^TW_2^*)^{-1}(W_2^*)^TX =  & U\Sigma V(V^T\Sigma^T U^TU\Sigma V)^{-1}V^T\Sigma^T U^TX\nonumber\\
   = & U\Sigma (\Sigma^T\Sigma)^{-1}  \Sigma^T U^TX.
\end{align*}
Note that $\Sigma^T\Sigma$ is a diagonal matrix. Therefore, we can set
\begin{align*}
    \tilde W = U\Sigma (\Sigma^T\Sigma)^{-1/2}.
\end{align*}
Clearly, 
\begin{align*}
    (\tilde W)^T\tilde W = (\Sigma^T\Sigma)^{-1/2}\Sigma^T U^TU\Sigma (\Sigma^T\Sigma)^{-1/2} = I_m,
\end{align*}
which implies 
\begin{align*}
    L_2 = & \|X-W_2^*((W_2^*)^TW_2^*)^{-1}(W_2^*)^TX\|_F^2 \nonumber\\
    = & \|X-U\Sigma (\Sigma^T\Sigma)^{-1}  \Sigma^T U^TX\|_F^2\nonumber\\
    = & \|X-U\Sigma_1U^TX\|_F^2\nonumber\\
    = & \|X - U_mU_m^TX\|_F^2\nonumber\\
    \geq & \|X - U_m^*(U_m^*)^TX\|_F^2\nonumber\\
    = & L_1,
\end{align*}
where $\Sigma_1 = {\rm diag}(1,...,1,0,...,0)$ is a rank $m$ diagonal matrix. This finishes the proof.
\end{proof}

With the above lemma, Theorem \ref{thm:pca} is a simple corollary. 
\begin{proof}[Proof of Theorem \ref{thm:pca}]
Recall the proof of Lemma \ref{lemma:pca} where the linear encoder is $W_1$ and the linear decoder is $W_2$. Then, in order to realize the PCA solution, we have 
\begin{align}
    \|W_1\|_2 = & \|(W_2^TW_2)^{-1}W_2^T\|_2 =  \|(\Sigma^T\Sigma)^{-1}  \Sigma^T\|_2 = \lambda_m^{-1},\nonumber\\
     \|W_2\|_2 = & \|\Sigma\|_2 = \lambda_1.
\end{align}
Thus, as long as $\lambda_1\lambda_m < 1$, the encoder is larger in terms of 2-norm.
\end{proof}

\section{More on Experiments}\label{sec:more_exp}
\subsection{Experiment details for \ours}\label{app:vq}
Let's recap the VQGAN basics with more details. 
VQGAN~\citep{esser2021taming} consists of two major components. In the first component, an input image $\bx$ is encoded by an encoder $f$ into latent representation $\bz=f(\bx)$. A quantization step is applied to obtain $\bz_q = \tau(\bz)$, which contains nearest neighbors of entries of $\bz$ from a learnable codebook. Then a decoder $g$ reconstructs $\widehat{\bx}=g(z_q)$ from the codes. After the encoder and decoder are trained (with adversarial loss), the second component, a transformer model, learns to generate discrete codes in the latent space, which are then mapped to images by the learned decoder $g$.

We evaluate our DAE modifications to VQGAN on the FacesHQ dataset, which is a combination of two face datasets CelebAHQ and FFHQ, with 85k training images and 15k validation images in total (Table~\ref{table:dataset}).
In all experiments, the input image size is 256x256.

\begin{table}[h]
	\centering
	\begin{tabular}{l|ccc}
		\hline
		Dataset & Training & Validation & Total \\
		\hline
		CelebaHQ & 25k & 5k & 30k \\
		FFHQ & 60k & 10k & 70k \\
		FacesHQ & 85k & 15k & 100k \\
		\hline
	\end{tabular}
	\caption{Number of images in the face datasets.}
	\label{table:dataset}
\end{table}
We use the official VQGAN implementation\footnote{https://github.com/CompVis/taming-transformers} and model architectures for FacesHQ, where both encoder $f$ and decoder $g$ have 128 convolution channels.
The spatial dimension of the latent representation is $16\times 16$. See Table~7 and Table~8 in~\citep{esser2021taming} for details. 
All experiments are run on eight V100 GPUs. For training the encoder and decoder, the learning rate is $4.5\times 10^{-6}$, the batch size is 8 on each GPU (total batch size 64), and the number of training epochs is 80. For training the transformer the learning rate is $2\times 10^{-6}$ and the batch size is 12 on each GPU.
The FIDs in Table~\ref{table:fid} are obtained by selecting the best results from three independent runs, for both VQGAN and \ours.

To implement our 2-stage DAE-VQGAN, we consider two ways to realize the auxiliary decoder family $\cG^{\cA}$ during the first stage of training the encoder. 
Similar to the settings in Section \ref{sec:exp_toy} and \ref{sec:vaegan}, we first utilize an extra auxiliary decoder $g_{aux}$ with half the number of channels as the actual decoder. Then, we experimented with decoder Dropout \citep{srivastava2014dropout}, as mentioned in Remark \ref{rmk:aux}. 

\begin{figure}[h]
	\centering
	\subfigure[Auxiliary $g_{aux}$]{
		\centering
		\includegraphics[width=0.4\textwidth]{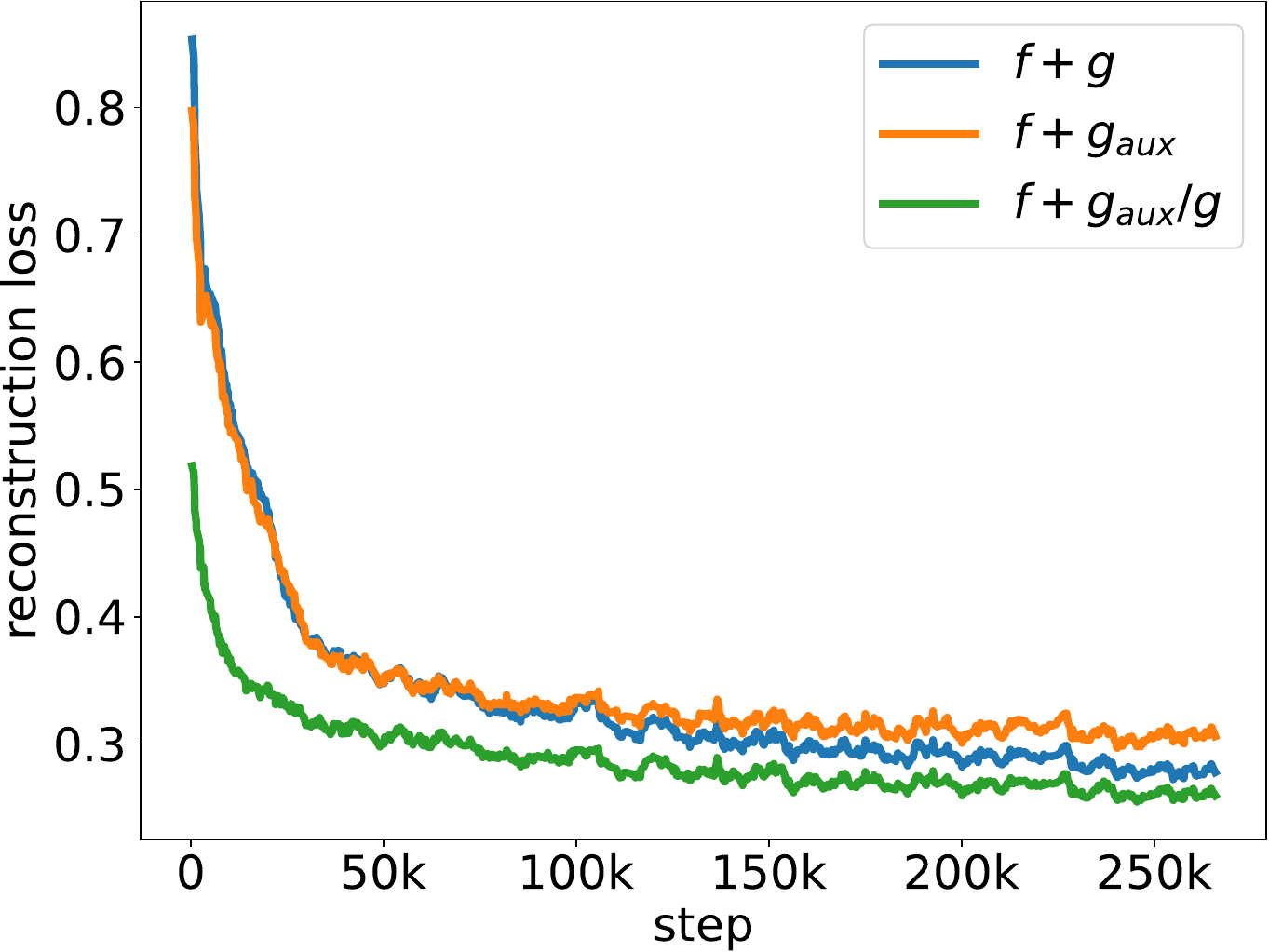}
	}
	\subfigure[Dropout]{
		\centering
		\includegraphics[width=0.45\textwidth]{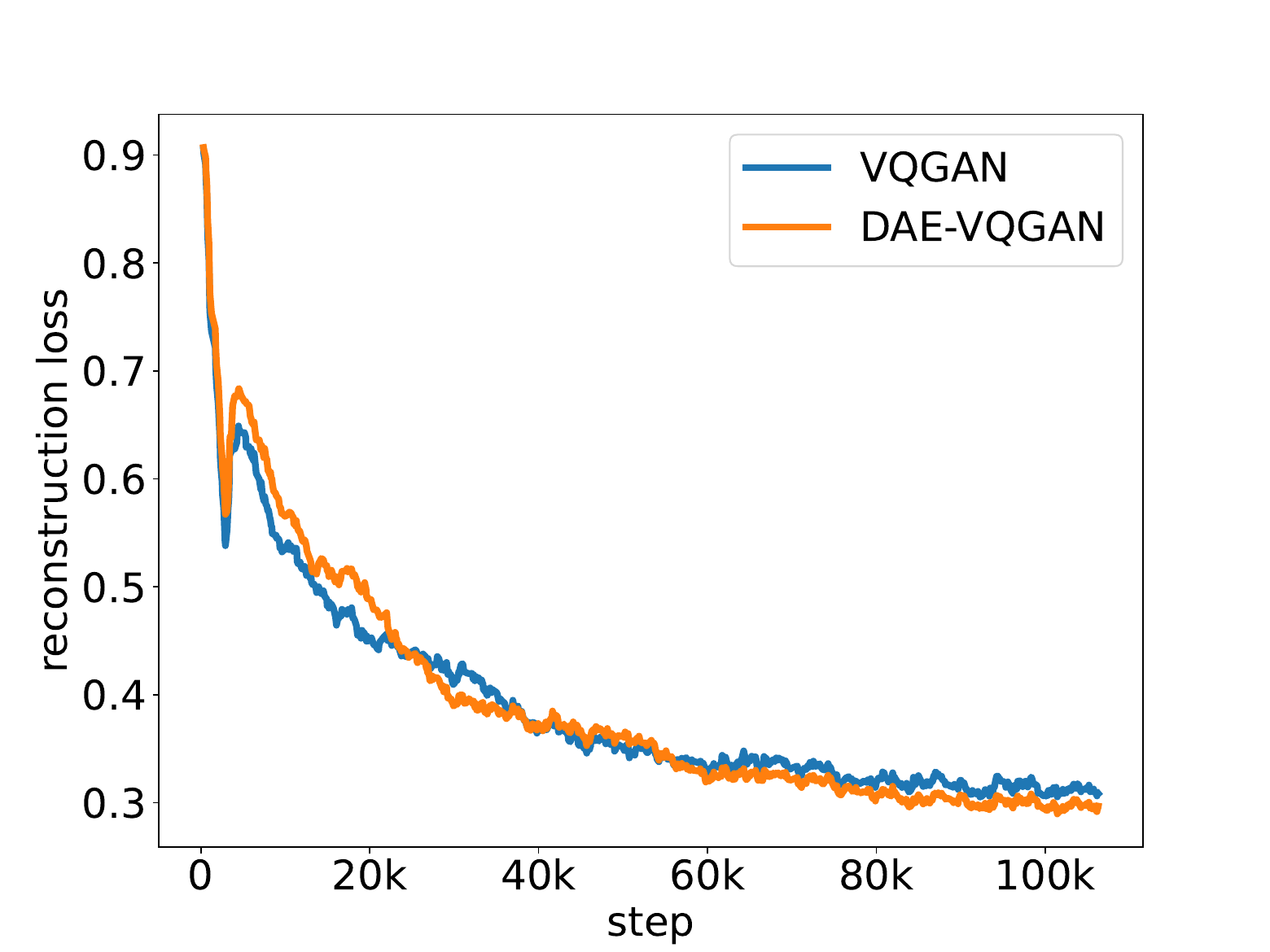}
	}
	\caption{ Training reconstruction loss trends for two implementations of DAE-VQGAN. (a) Halving the channels: while the reconstruction loss of our first stage ($f + g_{aux}$) converges to a higher value than the baseline ($f + g$), our second stage ($f + g_{aux}/g$) converges to a lower level with much faster convergence speed; (b) Dropout: the switch between the 2 stages of DAE occurs at halfway (40 epoch, around step 53k), where we can see that our method trails behind before the switch and surpasses the baseline after the switch and converges to a lower level of the reconstruction error. }
	\label{fig:rec_loss}
\end{figure}

\paragraph{Halving the channels.}
We use the same $f$ and $g$ architectures as the VQGAN baseline, and $g_{aux}$ is the same as $g$ except for the number of channels set to 64.
To distinguish, we use $f+g$ to represent the models from the baseline VQGAN, and use $f+g_{aux}$ and $f+g_{aux}/g$ to denote the first stage and second stage of \ours, respectively.
For the DAE training, we jointly train encoder $f$ and the auxiliary decoder $g_{aux}\in\cG^A$ in the first stage (first 40 epochs). 
Then in the second stage (last 40 epochs), we replace $g_{aux}$ with $g$, and train $g$ from scratch with $f$ (and the trained codebook) fixed.
For a more thorough investigation, we extended the run for both VQGAN and DAE-VQGAN to 200 epochs and the reconstruction error trends are shown in Figure \ref{fig:rec_loss}(a), where we can see that our \ours achieves significantly better reconstruction. For the actual evaluation on the FacesHQ dataset, we stick to the 80 epoch training scheme for both the baseline and our \ours, with the switch point between the two stages at epoch 40.

\paragraph{Dropout.}
In the first stage, we implement the relatively weak auxiliary decoder by applying 2D Dropout (channel-wise) to the decoder.
In particular, we replace the Dropout layer in each ResNet Block in the decoder of the original VQGAN with 2D Dropout and set ratio $p=0.5$, and train the encoder and the dropped decoder jointly.
Then in the second stage, we freeze the encoder and train the full decoder without Dropout. 
There are in total 80 training epochs for both the baseline and \ours (the first 40 epochs belong to the first DAE stage and the second DAE stage takes up the next 40 epochs). Figure \ref{fig:rec_loss}(b) shows the trends of the reconstruction loss during training, where we can see that our \ours achieves significantly better reconstruction as well, despite the fact that the encoder $f$ is only updated in the first stage.

\begin{figure}[h]
	\centering
	\subfigure[Encoder]{
		\centering
		\includegraphics[width=0.4\textwidth]{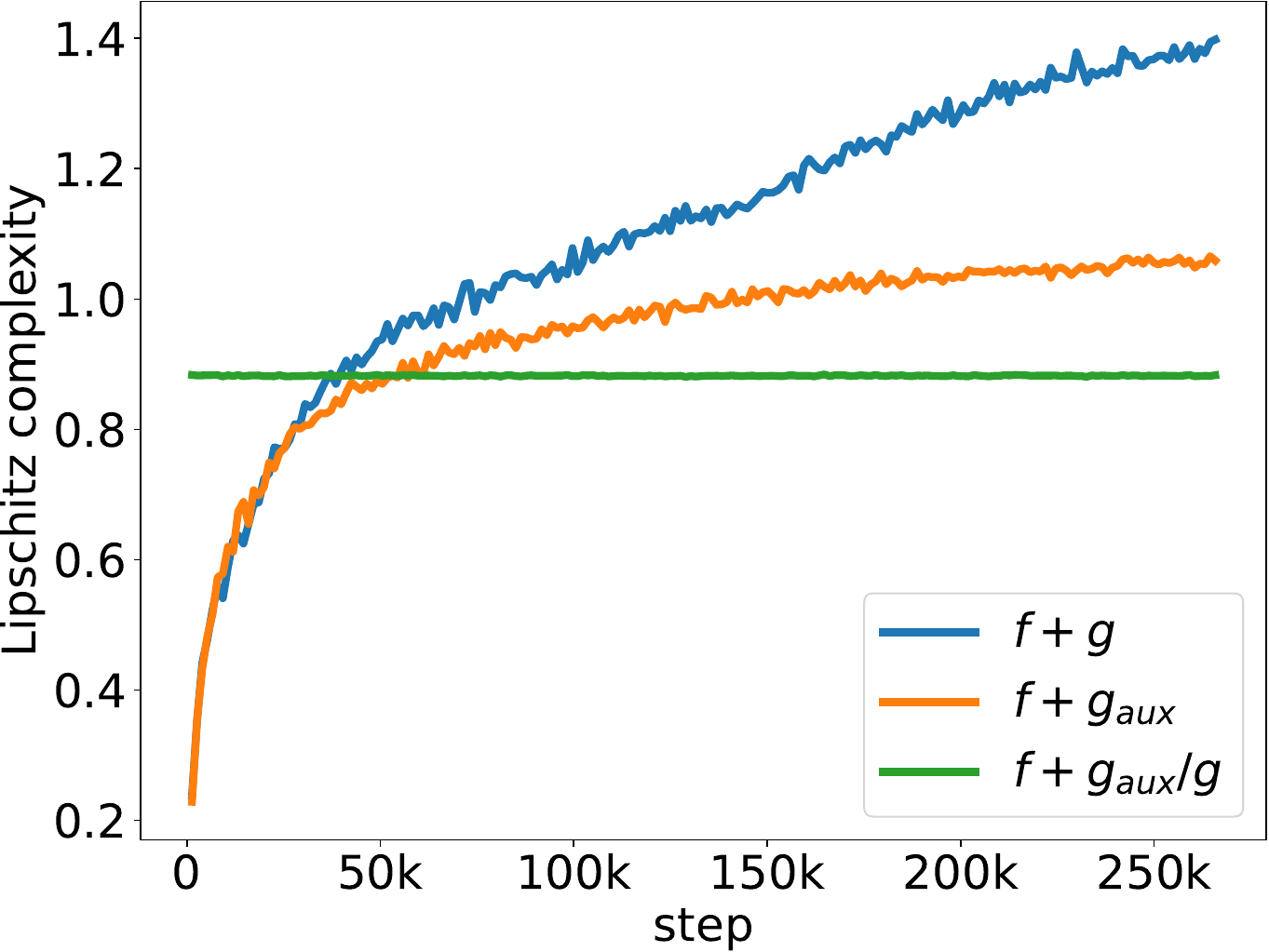}
	}
	\subfigure[Decoder]{
		\centering
		\includegraphics[width=0.4\textwidth]{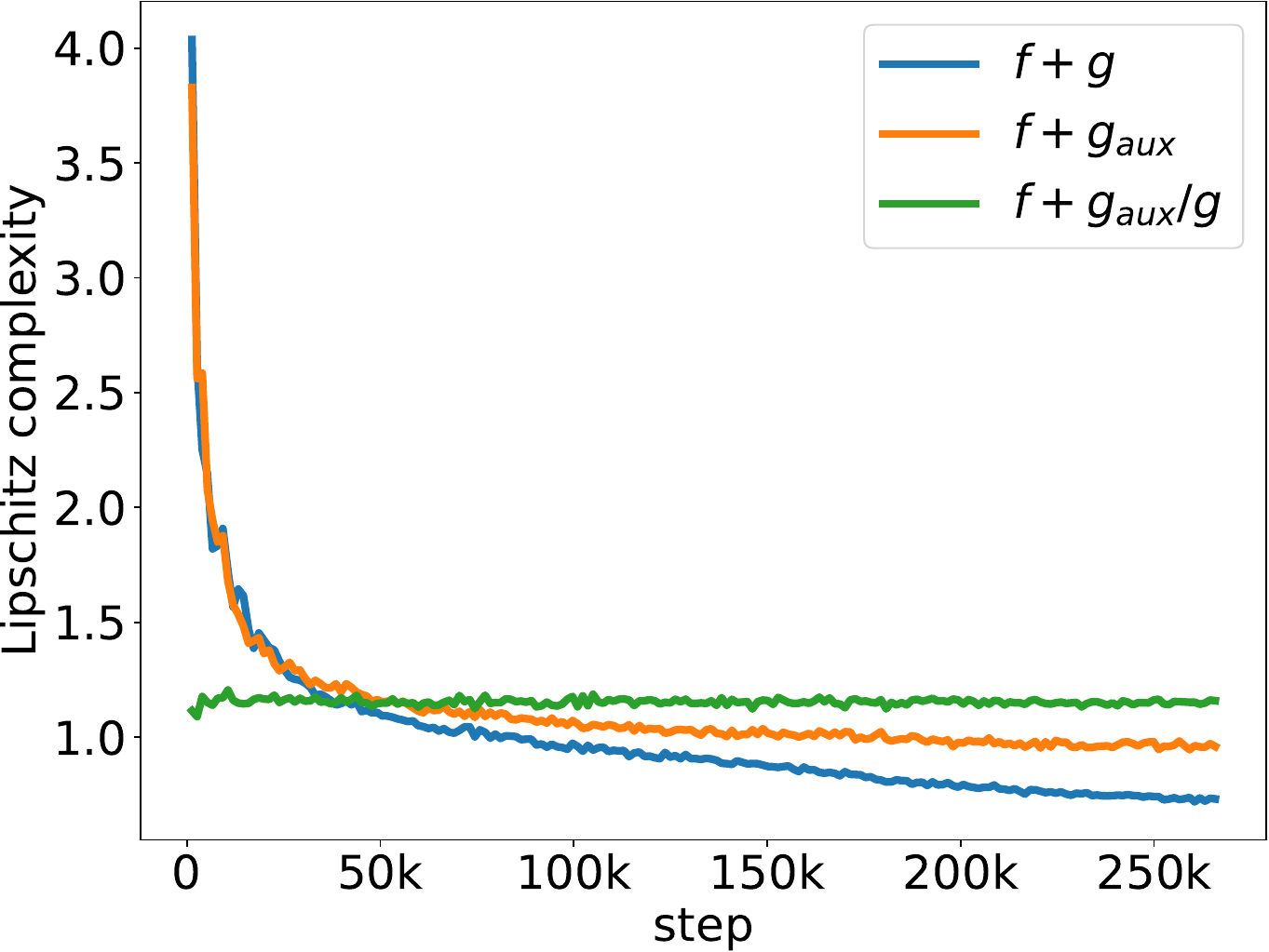}
	}
	\caption{Lipschitz complexity of encoder and decoder during training. Our DAE second stage $f+g_{aux}/g$ utilizes the frozen encoder $f$ from the first stage $f+g_{aux}$ at epoch 40 (around step 53k, where the blue line and the orange line intersect). Since $g\circ f$ is trained to be the identity map, the trends for $C_{lip}(g)$ and $C_{lip}(f)$ are approximately reciprocal and the ideal values for them are 1. Compared with the baseline $f+g$ (blue line), our DAE stage one $f+g_{aux}$ (orange line) has a significantly lower $C_{lip}(f)$ that is closer to 1. Since the second stage $f+g_{aux}/g$ does not update the encoder, the green line stays flat.  }
	\label{fig:lips}
\end{figure}

\subsection{Complexity of encoder and decoder}
\label{sec:grad_c}
One benefit of our DAE is the ability to better exploit the capacity/complexity of the encoder and decoder. To this end, we investigate the Lipschitz complexity $C_{Lip}$ of encoder $f$ and decoder $g$ defined as:
\begin{align}
	C_{Lip}(f) &= \mathbf{E}_{x_1, x_2\sim P_x} \left[\frac{\|\tau(f(x_1)) - \tau(f(x_2)) \|_2}{\|x_1 - x_2\|_2}\right], \label{eqn:comp_lip}\\
	C_{Lip}(g) &= \mathbf{E}_{x_1, x_2\sim P_x} \left[\frac{\|g(\tau(f(x_1))) - g(\tau(f(x_2))) \|_2}{\|\tau(f(x_1)) - \tau(f(x_2))\|_2}\right].\nonumber
\end{align}

Without loss of generality, we choose the DAE-VQGAN implemented by introducing an auxiliary $g_{aux}$ for demonstration. The conclusions for Dropout are similar. 
Figure~\ref{fig:lips} shows the Lipschitz complexity of the encoder and decoder during 200 epochs of training, where $f+g$ represents the baseline VQGAN, $f+g_{aux}$ represents the first stage of our method, and $f+g_{aux}/g$ represents the second stage of our method where encoder $f$ is initialized with weights from the 40-th epoch (around training step 53k) of the first stage training. 
Since $g\circ f$ is trained to be approximately the identity map, the trends of the encoder and decoder are the opposite.
As stated in Section \ref{sec:pz}, the Lipschitz complexity should be ideally close to 1 for both the encoder and decoder. 
Compared with the baseline, our \ours produces a better latent distribution with significantly lower complexity overall (closer to 1) while achieving smaller reconstruction errors.
We also compute a variant of $C_{Lip}$ where the $\ell_2$ norms of difference between $x_1$ and $x_2$, and that between their reconstructions, are replaced with perceptual similarity (LPIPS)~\citep{zhang2018unreasonable}. This LPIPS complexity variant shows similar trends as $C_{Lip}$ (Figure~\ref{fig:lips_lpips}).

\begin{figure}[h]
	\centering
	\subfigure[Encoder]{
		\centering
		\includegraphics[width=0.38\textwidth]{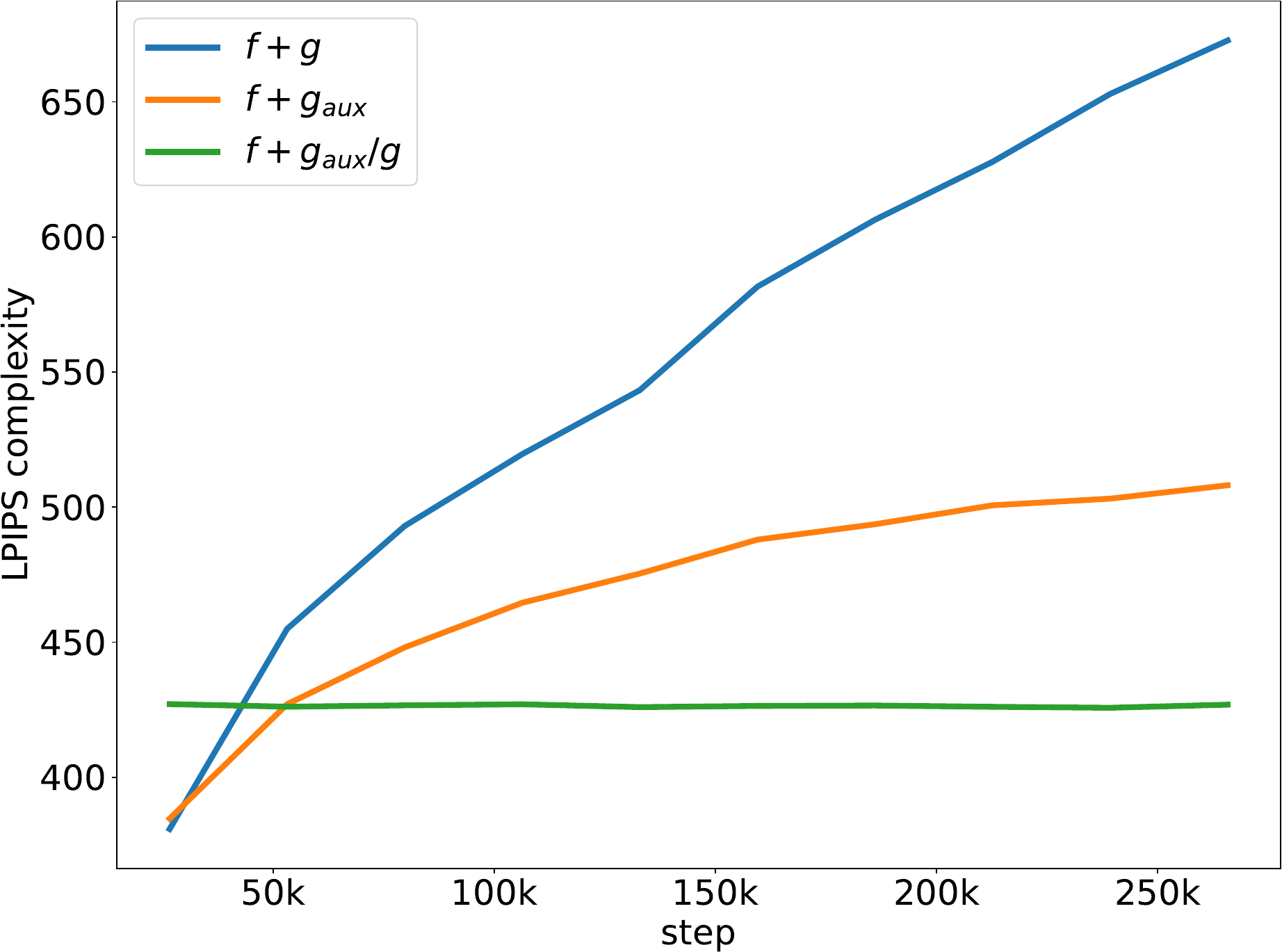}
	}
	\hspace{20pt}
	\subfigure[Decoder]{
		\centering
		\includegraphics[width=0.4\textwidth]{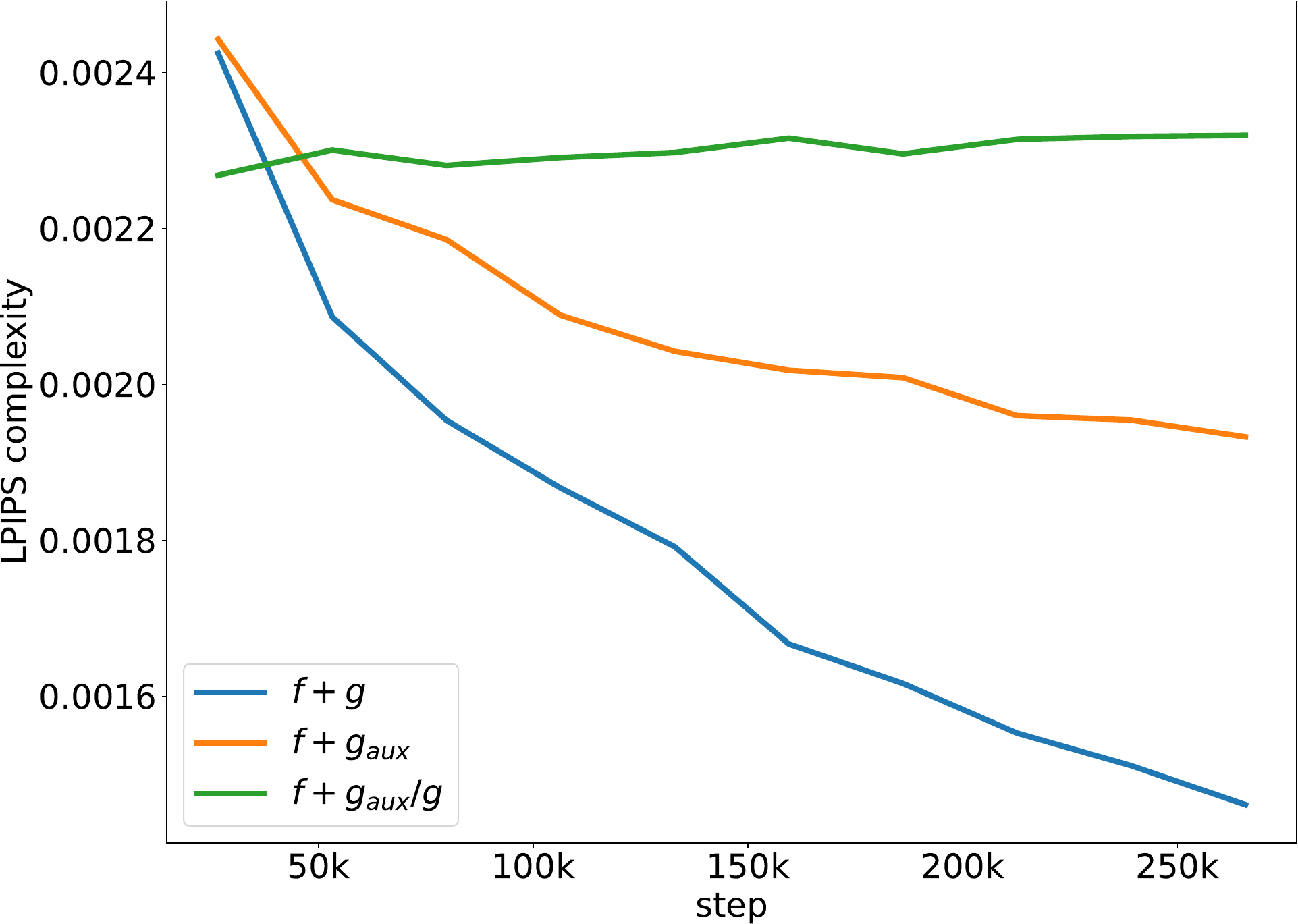}
	}
	\caption{Lipschitz complexity with perceptual similarity (LPIPS complexity) of encoder and decoder. The overall trends are similar to those in $C_{lip}$ shown in Figure \ref{fig:lips}. }
	\label{fig:lips_lpips}
\end{figure}

\subsection{VQ codebook}\label{app:code}
We look into the codebook and investigate how different training strategies affect the behavior of the learned latent.
For both VQGAN and \ours, the codebook size is 1024. 
For the codebook itself, we compute the pairwise cosine similarity among the learned codes and show the histogram of the similarity scores in Figure~\ref{fig:sim_hist}. 
For VQGAN, the cosine similarities are much more concentrated near 1 and -1, indicating that a large portion of learned codes may be highly similar. In comparison, codes learned by \ours are more scattered in the latent space and therefore can be more efficient. 
The expressiveness of our codebook is more powerful in approximating the ideal latent distribution than the original one.
We further plot the eigenvalues of the cosine distance matrix for learned codes in Figure~\ref{fig:ev}, where we can clearly see that the eigenvalue decay for the baseline is significantly faster, indicating more severe dimensional collapse.

\begin{figure}[ht]
	\centering
	\subfigure[]{
		\centering
		\includegraphics[height=3.12cm]{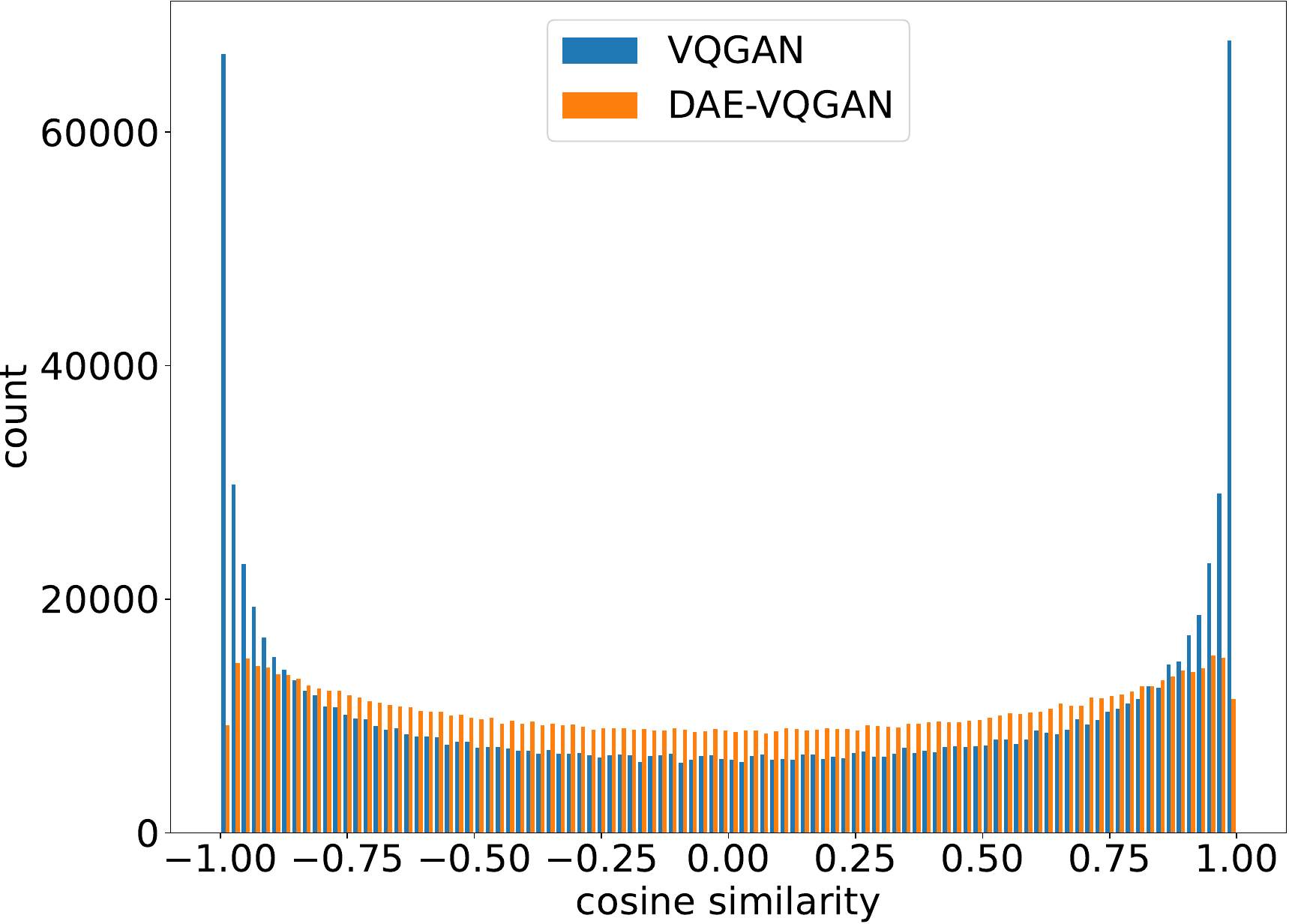}
		\label{fig:sim_hist}
	}
    \subfigure[]{
		\centering
		\includegraphics[height=3.12cm]{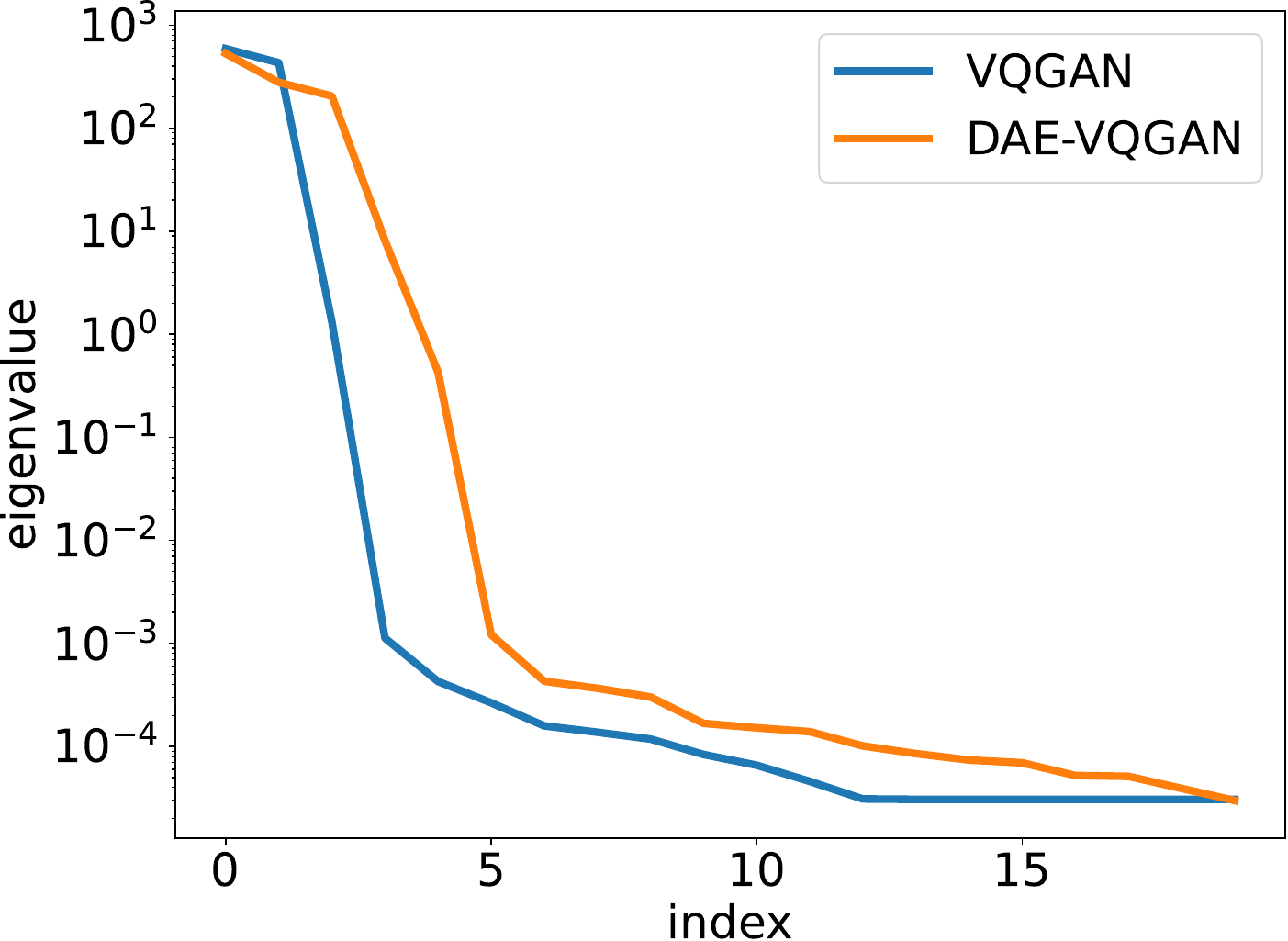}
		\label{fig:ev}
	}
    \subfigure[]{
		\centering
		\includegraphics[height=3.12cm]{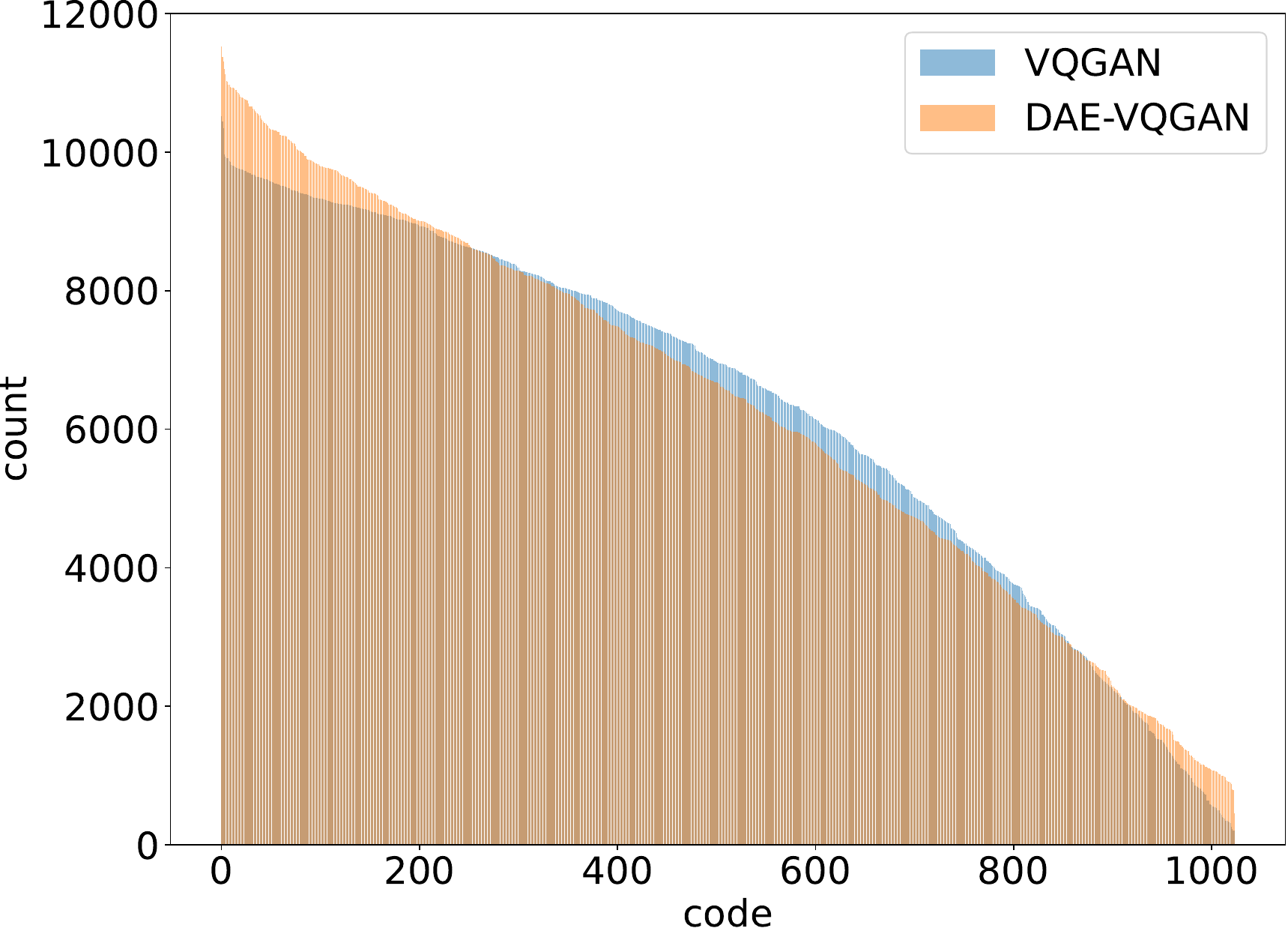}
		\label{fig:codes_count}
	}
	\caption{(a) Histogram of cosine similarity of codes. (b) Top 20 eigenvalues of the cosine distance matrix. (c) Number of appearances of codes in sorted order.}
\end{figure}

For the latent expressed using the codebook, we count and sort the number of appearances of codes over the training set of CelebAHQ (Figure~\ref{fig:codes_count}). We see that \ours can learn less redundant codes and hence their appearances are more evenly distributed. 
Interestingly, all codes are utilized in the reconstruction process for both VQGAN and \ours, perhaps due to the limited size of the codebook. 

\subsection{Experiment details for DAE-DiT}\label{app:dit}
We note that while the auxiliary decoder $g_{aux}$ approach presents clear trends of complexity changing, it requires manually changing the network between the two stages, and hence is less convenient to implement than the Dropout approach in practice. Therefore we adopt Dropout as the default approach in our DAE-DiT experiments. 

Similar to \ours,
in the first stage of DAE-DiT, we implement the relatively weak auxiliary decoder by applying Dropout\citep{srivastava2014dropout} to $g$ with ratio $p=0.2$, and train encoder $f$ and the dropped decoder jointly. Then in the second stage, we freeze encoder $f$ and train the full decoder $g$ without Dropout.
We train on the OpenImages \citep{kuznetsova2020open} dataset and adopt the KL-f8 model as in~\citep{rombach2022high}.
The encoder and decoder are trained with learning rate $4.5\times 10^{-6}$ and batch size $64$.

For the diffusion training, we adjust the scale factor from the default 0.18215 to 0.8 such that the variance of features from the encoder is close to 1. Following the same setup as the official implementation, the EMA rate is 0.9999 and the classifier-free guidance scale is 4. The computation during both the training and the testing process was executed using FP16 precision except for the attention module for computational stability. 
\end{document}